\theoremstyle{plain}
\newtheorem{theorem}{Theorem}[section]
\newtheorem{lemma}[theorem]{Lemma}
\newtheorem{corollary}[theorem]{Corollary}
\theoremstyle{definition}
\newtheorem{definition}[theorem]{Definition}
\newtheorem{assumption}[theorem]{Assumption}
\theoremstyle{remark}
\newcommand{\textproc}[1]{\textsc{#1}}
\newcommand{\IfThen}[2]{\State \algorithmicif\ #1\ \algorithmicthen\ #2}
\title{Bayes Adaptive Monte Carlo Tree Search for\\ Offline Model-based Reinforcement Learning}
\author{Jiayu Chen \\
The University of Hong Kong\\
INFIFORCE Intelligent Technology \\
Hong Kong SAR\\
\texttt{jiayuc@hku.hk}
\And
Xu Le\\
Tsinghua University \\
Beijing, China
\And
Wentse Chen \& Jeff Schneider \\
Carnegie Mellon University \\
Pittsburgh, PA 15213, USA
}
\begin{document}

\maketitle

\begin{abstract}
Offline reinforcement learning (RL) is a powerful approach for data-driven decision-making and control. Compared to model-free methods, offline model-based reinforcement learning (MBRL) explicitly learns world models from a static dataset and uses them as surrogate simulators, improving the data efficiency and enabling the learned policy to potentially generalize beyond the dataset support. However, there could be various MDPs that behave identically on the offline dataset and dealing with the uncertainty about the true MDP can be challenging.  In this paper, we propose modeling offline MBRL as a Bayes Adaptive Markov Decision Process (BAMDP), which is a principled framework for addressing model uncertainty. We further propose a novel Bayes Adaptive Monte-Carlo planning algorithm capable of solving BAMDPs in continuous state and action spaces with stochastic transitions. This planning process is based on Monte Carlo Tree Search and can be integrated into offline MBRL as a policy improvement operator in policy iteration. Our ``RL + Search" framework follows in the footsteps of superhuman AIs like AlphaZero, improving on current offline MBRL methods by incorporating more computation input. The proposed algorithm significantly outperforms state-of-the-art offline RL methods on twelve D4RL MuJoCo tasks and three challenging, stochastic tokamak control tasks.
% target tracking tasks in a challenging, stochastic tokamak control simulator.
\end{abstract}

\section{Introduction}

The success of RL typically relies on large amounts of interactions with the environment. However, in real-world scenarios, such interactions can be unsafe or costly. As an alternative, offline RL \citep{DBLP:journals/corr/abs-2005-01643} leverages offline datasets of transitions, collected by a behavior policy, to train a policy. To avoid overestimation of the expected return for out-of-distribution states, which can mislead policy learning, model-free offline RL methods \citep{DBLP:conf/nips/KumarZTL20, DBLP:journals/corr/abs-1911-11361} often constrain the learned policy to remain close to the behavior policy. However, acquiring a large volume of demonstrations from a high-quality behavior policy, can be expensive. This challenge has led to the development of offline model-based reinforcement learning (MBRL) approaches, such as \citet{DBLP:conf/iclr/LuBPOR22, DBLP:conf/nips/GuoSG22}. These methods train dynamics models from offline data and optimize policies using imaginary rollouts simulated by the models. A key advantage of this approach is that the dynamics modeling process is decoupled from the behavior policy's objectives. This makes it possible, in principle, to learn effective policies even from sub-optimal or exploratory data, so long as the dataset provides adequate coverage of the relevant state-action space.

Multiple MDPs can exhibit identical behaviors on an offline dataset of states and actions, yet their underlying dynamics and reward functions may differ, particularly on out-of-sample regions. This implies that we are dealing with a distribution of possible world models underlying a dataset. A common strategy in offline MBRL is to learn an ensemble of world models and treat them equally. For instance, when determining the next state, a world model would be uniformly sampled from the ensemble to generate its prediction. However, \textbf{different ensemble members may perform better in different regions of the state-action space, making it necessary to adapt the belief over each ensemble based on the experience.} The Bayes Adaptive Markov Decision Process (BAMDP) \citep{duff2002optimal} provides a principled framework for modeling such adaptations, and BAMCP \citep{DBLP:journals/jair/GuezSD13} is an efficient online planning method for solving BAMDPs. However, BAMCP has several limitations: (1) it relies on a ground-truth world model for planning; (2) it is restricted to discrete state and action spaces; and (3) its outcome is an action choice at a particular state, rather than a policy function. To address these challenges: (1) we learn an ensemble of world models from the offline dataset and construct a pessimistic BAMDP as an estimation of the true MDP; (2) we propose a novel planning algorithm to solve BAMDPs in continuous state and action spaces based on double progressive widening \citep{DBLP:conf/pkdd/AugerCT13}; and (3) we distill the planning outcomes into a policy function via RL, enabling real-time execution. \textbf{Notably, we provide a theoretical proof establishing the consistency of our planner in continuous, Bayes-adaptive MDP settings.}

Our main contributions include:
(1) Firstly using BAMDPs to handle model uncertainties in offline MBRL;
(2) Proposing theoretically-grounded Continuous BAMCP for planning in continuous, stochastic BAMDPs;
(3) Developing the first algorithm to successfully integrate Bayesian RL, offline MBRL, and deep search for data-driven continuous control;
(4) Demonstrating the state-of-the-art performance on twelve D4RL MuJoCo tasks and three target tracking tasks in tokamak control (for nuclear fusion).

\section{Background} \label{back}

An MDP \citep{puterman2014markov} can be described as a tuple $\mathcal{M}=\langle \mathcal{S}, \mathcal{A}, \mathcal{P}, \mathcal{R}, \gamma \rangle$. $\mathcal{S}$ and $\mathcal{A}$ are the state space and action space, respectively. $\mathcal{P}: \mathcal{S} \times \mathcal{A} \rightarrow \Delta_\mathcal{S}$ is the dynamics function and $\mathcal{R}: \mathcal{S} \times \mathcal{A} \rightarrow \Delta_{[0, 1]}$ is the reward function, where $\Delta_\mathcal{X}$ denotes the set of probability distributions on $\mathcal{X}$. $\gamma \in [0, 1)$ is a discount factor. 
% The goal of solving an MDP is to find a policy $\pi: \mathcal{S} \rightarrow \Delta_\mathcal{A}$ that maximizes the expected return, defined as the cumulative discounted reward over time. 
A Bayes Adaptive MDP (BAMDP) \citep{duff2002optimal} can model scenarios where the precise MDP $\mathcal{M}_\theta=\langle \mathcal{S}, \mathcal{A}, \mathcal{P}_\theta, \mathcal{R}_\theta, \gamma \rangle$ is uncertain but is known to follow a prior distribution $b_0(\theta)$. During planning, a Bayes-optimal agent would update its belief over the MDP based on its experience. Formally, a BAMDP can be described as a tuple $\mathcal{M}^+=\langle \mathcal{S}^+, \mathcal{A}, \mathcal{P}^+, \mathcal{R}^+, \gamma \rangle$. $\mathcal{S}^+$ denotes the space of information states $(s, b)$, which is a composition of the physical state and the current belief over the MDP. After each transition $(s, a, r, s')$, the belief is updated to the corresponding Bayesian posterior: $b'(\theta) \propto b(\theta) P((s, a, r, s') | \theta) = b(\theta) \mathcal{P}_\theta(s' | s, a) \mathcal{R}_\theta(r | s, a)$. Accordingly, $\mathcal{P}^+$ and $\mathcal{R}^+$ are defined as follows:
\begin{equation} \label{bamdp-mdp}
\begin{aligned}
    \mathcal{P}^+((s', b'') | (s, b), a) = \mathds{1}(b''=b')\int_{\theta} \mathcal{P}_\theta(s' | s, a) b(\theta) d\theta\\
    \mathcal{R}^+((s, b), a) = \int_{\theta} \mathcal{R}_\theta(s, a) b(\theta) d\theta \qquad\qquad
\end{aligned}
\end{equation}
The Q-function that satisfies the Bellman optimality equations (i.e., Eq. (\ref{boe}))
is the Bayes-optimal Q-function and $\pi^*(s, b) = \arg \max_a Q^*((s, b), a)$ is the Bayes-optimal policy. Actions derived from $\pi^*$ are executed in the true MDP and constitute the best course of actions for a Bayesian agent with a prior belief $b_0$ over the underlying MDP \citep{DBLP:conf/nips/GuezHSD14}. 
Bayesian RL \citep{DBLP:journals/ftml/GhavamzadehMPT15} seeks to compute the optimal policy, $\pi^*$, and provides a principled framework for handling uncertainty in the world model $\mathcal{M}_\theta$. However, the requirement for Bayesian posterior updates (i.e., Eq. (\ref{bamdp-mdp})) often renders it computationally intractable, and we discuss several approximate methods in Section \ref{rel_works}.
\begin{equation} \label{boe}
\begin{aligned}
    Q^*(x, a) = \mathcal{R}^+(x, a) + \gamma \int_{x'} V^*(x')  \mathcal{P}^+(x'|x, a) d x' \\
    V^*(x') = \max_a Q^*(x', a),\ \forall x=(s, b) \in \mathcal{S}^+,\ a \in \mathcal{A}
\end{aligned}
\end{equation}
% A BAMDP can be cast into a partially observable MDP (POMDP) \citep{littman2009tutorial} by viewing $\mathcal{S}^+$ and $\mathcal{S}$ as the state and observation spaces, respectively. As a result, approaches developed for POMDPs can potentially be used to solve BAMDPs. 
% \footnote{Such a POMDP can be defined as a tuple $\langle \mathcal{S}^+, \mathcal{A}, \mathcal{P}^+, \mathcal{R}^+, \mathcal{Z}, \mathcal{S}, \gamma \rangle$, according to its original definition in \citep{littman2009tutorial}.} and setting the observation function as $\mathcal{Z}((s', b'), a, o) = \mathds{1}(s'=o)$. ($\mathcal{Z}$ specifies the probability of observing $o$ under the previous action $a$ and current state $(s', b')$.)

% According to \citep{DBLP:journals/ftml/GhavamzadehMPT15}, its main advantages include: (1) Domain knowledge can be injected by defining a proper prior belief; (2) A Bayes Adaptive policy can solve the exploration-exploitation dilemma by explicitly including the belief in its state representation and incorporating belief updates into the planning process. 
% However, Bayes-optimal planning is generally intractable, so we introduce some approximate methods in Section \ref{rel_works}.

\section{Related Works} \label{rel_works}

\textbf{Offline model-based RL:} Offline RL \citep{DBLP:journals/corr/abs-2402-13777} enables an agent to learn control policies from datasets of environment transitions pre-collected by a behavior policy $\mu$, i.e., $\mathcal{D}_\mu = \{[(s_t^i, a_t^i, r_t^i)_{t=1}^T]_{i=1}^N\}$.
Offline Model-based RL (MBRL) methods explicitly learn world models $\mathcal{M}_\theta$ from $\mathcal{D}_\mu$ and adopt $\mathcal{M}_\theta$ as a surrogate simulator, enabling the learned policy to possibly generalize to states beyond $\mathcal{D}_\mu$. Both planning methods \citep{DBLP:conf/iclr/ArgensonD21, DBLP:conf/ijcai/ZhanZX22, DBLP:journals/ral/DiehlSKHB23}
% , such as MPC (\cite{garcia1989model}), 
and RL methods \citep{DBLP:conf/nips/YuTYEZLFM20, DBLP:conf/nips/KidambiRNJ20, DBLP:conf/iclr/LuBPOR22} can use the learned $\mathcal{M}_\theta$ to obtain a policy. However, since $\mathcal{D}_\mu$ may not span the entire state-action space, $\mathcal{M}_\theta$ is unlikely to be globally accurate. Learning/Planning without any safeguards against such model inaccuracy can yield poor results. \citet{DBLP:conf/nips/YuTYEZLFM20, DBLP:conf/nips/KidambiRNJ20, DBLP:conf/iclr/LuBPOR22} propose learning an ensemble of world models, using ensemble-based uncertainty estimations to construct a pessimistic MDP (P-MDP), and learning a near-optimal policy atop it. Ideally, for any policy, the performance in the real environment is lower-bounded by the performance in the corresponding P-MDP (with high probability), thus avoiding being overly optimistic about an inaccurate model. 
\textbf{Notably, none of these offline MBRL methods have modeled the problem as a BAMDP, although Bayesian RL provides a principled framework for handling model uncertainty.} 

\textbf{Bayes-adaptive planning:} Approximate methods for solving BAMDPs, such as \citet{DBLP:conf/uai/AsmuthLLNW09, DBLP:conf/uai/SorgSL10, DBLP:conf/pkdd/CastroP10, asmuth2011approaching, DBLP:conf/icml/WangWHL12, DBLP:journals/jair/GuezSD13, DBLP:journals/iet-cps/SladeSK20} have been developed. As a representative work, BAMCP \citep{DBLP:journals/jair/GuezSD13} adopts Monte-Carlo Tree Search (MCTS) \citep{browne2012survey} for Bayes-adaptive planning and is shown to converge in probability to a near Bayes-optimal policy at the root node of the search tree. \textbf{However, all these methods cannot be applied to large-scale MDPs with continuous state and action spaces.} 

Also, these planning algorithms are not designed for offline MBRL. How to incorporate planning outcomes for policy improvement in RL and how to handle the inaccuracy of the learned world model during planning still require exploration.

% Section 4.1: borrow justification from MC-BRL (and two similar works) and MoREL

\section{Methodology}
\label{methodology}

In this section, we present a novel offline MBRL algorithm based on Bayes Adaptive MCTS. 
% The core challenge is to design  that is efficient in large stochastic MDPs. 
In Section \ref{ContBAMCP}, we propose a Bayes Adaptive planning method (i.e., Continuous BAMCP) that can be applied to stochastic continuous control tasks. Then, in Section \ref{SBPI}, we present a search-based policy iteration framework, where the search results (from Continuous BAMCP) are distilled into policy and value networks for policy improvement/evaluation at each iteration. In this way, we integrate offline MBRL with Bayes Adaptive MCTS. Both components require the use of an ensemble of world models for either practical implementation or uncertainty quantification, as detailed in Section \ref{DeepEns}.

\subsection{The Key Role of Deep Ensembles} \label{DeepEns}

% why bayes adaptive
% why using ensembles

Offline MBRL methods estimate world models $\mathcal{M}_\theta$ from a static dataset $\mathcal{D}_\mu$. 
% which would inevitably induce epistemic uncertainty about the identity of the true MDP $\mathcal{M}^*$. 
However, there could be various MDPs that behave identically on the limited set of states and actions in $\mathcal{D}_\mu$, but their dynamics and reward functions may differ, especially on out-of-sample states and actions. Thus, we are actually dealing with a distribution of world models that follow a prior distribution $b_0(\theta) \triangleq P(\mathcal{M}_\theta | \mathcal{D}_\mu)$. As introduced in Section \ref{back}, Bayesian RL handles such model uncertainty by explicitly including the belief over the models in its state representation.
% by transforming the uncertainty about the world model into certainty about the current state inside an augmented state space $\mathcal{S} \times \mathcal{B}$
The belief is updated with experience, providing a measure of how the models' uncertainty has changed since the beginning of the episode. 

% As a result, the agent can adjust its behavior upon receiving new information that reduces the epistemic uncertainty. Such an adaptive policy is necessary to act optimally in offline RL, as demonstrated in \cite{DBLP:conf/icml/GhoshAAL22}.

% \footnote{The reward and dynamics function $(\mathcal{R}_\theta^i, \mathcal{P}_\theta^i)$ are usually trained as a unified probabilistic model $\mathcal{N}(\mu_{\theta}^i, \sigma_{\theta}^i)$, since the reward $r$ can be viewed as an element of the next state $s'$.}

The idea of deep ensembles \citep{DBLP:conf/nips/Lakshminarayanan17} is to train multiple deep neural networks as approximations of a function (for uncertainty quantification), each using a different weight initialization and optimized with a different mini-batch sequence. For offline MBRL, we can learn an ensemble of dynamics models $\{\mathcal{P}_{\theta}^1, \cdots, \mathcal{P}_{\theta}^K\}$ and reward models $\{\mathcal{R}_{\theta}^1, \cdots, \mathcal{R}_{\theta}^K\}$ from the dataset $\mathcal{D}_\mu$ by minimizing the following supervised learning loss: ($i = 1, \cdots, K$)
\begin{equation}
\begin{aligned}
    \mathcal{L}(\mathcal{P}_{\theta}^i) = -\mathbb{E}_{(s, a, s') \sim \mathcal{D}_\mu} \left[\log \mathcal{P}_{\theta}^i(s'|s, a)\right]\\
    \mathcal{L}(\mathcal{R}_{\theta}^i) = -\mathbb{E}_{(s, a, r) \sim \mathcal{D}_\mu} \left[\log \mathcal{R}_{\theta}^i(r|s, a)\right]
\end{aligned}
\end{equation}
\textbf{$\{(\mathcal{P}_{\theta}^i, \mathcal{R}_{\theta}^i)_{i=1}^K\}$ can be viewed as a set of independent and identically distributed (IID) samples from the prior $P(\mathcal{M}_\theta | \mathcal{D}_\mu)$ and constitute a finite approximation of the space of world models.} With such an ensemble, the belief over the world models can be converted to a mass function over a set of $K$ items, where the $i$-th element denotes the probability of being in the MDP $(\mathcal{P}_{\theta}^i, \mathcal{R}_{\theta}^i)$. In this case, a reasonable prior distribution is $b_0(\theta) = [1/K, \cdots, 1/K]$, since these models are IID prior samples. After receiving a transition $(s, a, r, s')$, the belief can be updated as follows:
\begin{equation} \label{b'}
\begin{aligned}
    b'(\theta)(i) \propto b(\theta)(i)\mathcal{P}_\theta^i(s' | s, a) \mathcal{R}_\theta^i(r | s, a),\ i = 1, \cdots, K
\end{aligned}
\end{equation}
% This update requires a single inference from each ensemble member, but can be parallelized for computational efficiency. 
This is a practical implementation of the Bayesian posterior (originally defined in Eq. (\ref{bamdp-mdp})) update based on deep ensembles, where $b(\theta)$, $b'(\theta)$, and $\mathcal{P}_\theta^i(s' | s, a) \mathcal{R}_\theta^i(r | s, a)$ denote the prior, posterior distributions, and likelihood, respectively.
% This simplified definition of $b(\theta)$ also enables efficient execution of transitions in Bayesian RL, as described in Equation (\ref{bamdp-mdp}).

% justification for using such a finite approximation

% The ensemble can also be used for uncertainty quantification in offline MBRL. 
% As aforementioned, our algorithm relies on thorough search on the learned world models. Without any constraints on the search process, the learned policy may overfit to an inaccurate model (by overestimating the expected return) and fail in the true MDP. 
Although the agent could adapt its belief and follow more reliable ensemble members in the Bayesian RL framework, there could be regions in the state-action space where none of the members generalize well, as they are all learned from a static offline dataset. A typical solution for such uncertainty is to construct a pessimistic MDP, which discourages the agent from exploiting regions with high uncertainty. \textbf{Another use of the deep ensemble is to construct a Pessimistic Bayes-Adaptive MDP (P-BAMDP).}

In particular, we apply a reward penalty to each transition $(s, a, r)$ and get a penalized reward $\tilde{r}$ as follows:
\begin{equation} \label{p-rwd}
\begin{aligned}
    r-\lambda\cdot \text{std}\left[ r^i + \gamma \mathbb{E}_{
    \substack{
         s'^i\sim\mathcal{P}_\theta^i, a'\sim \pi(s'^i)
        }
    }
    [Q_{\psi^-}(s'^i,a')]\right]_{i=1}^K
\end{aligned}
\end{equation}
Here, $s'^i\sim \mathcal{P}_\theta^i(\cdot|s,a)$ and $r^i = \mathbb{E}[\mathcal{R}_\theta^i(\cdot|s,a)]$ denote the next state and expected reward from the $i$-th ensemble member. $\pi$ and $Q_{\psi^-}$ represent the policy and target Q-network. This penalty term is the standard deviation of the one-step lookahead Q-value targets, computed across all ensemble members. The underlying intuition is that a high variation in these predictions indicates a high degree of uncertainty at this region. As demonstrated by \citet{DBLP:conf/icml/SunZJLYY23}, employing this penalty to define a pessimistic MDP yields an agent performance estimate that more closely matches real-world outcomes, compared to penalties based solely on ensemble disagreement in next-state prediction, as proposed in \citet{DBLP:conf/nips/YuTYEZLFM20, DBLP:conf/nips/KidambiRNJ20}.

\subsection{Continuous BAMCP} \label{ContBAMCP}
BAMCP \citep{DBLP:journals/jair/GuezSD13} has been successful in solving large-scale BAMDPs, \textbf{as detailed in Appendix \ref{bamcp}}, but it is limited to scenarios with discrete state and action spaces.
In this subsection, we introduce a novel planning method to approximate the Bayes-optimal policy at a decision point $(s, h)$ ($h$ denotes the transition history that ends at $s$), which can be used to solve BAMDPs with continuous states/actions and stochastic transition kernels. 
% Checking Algorithm \ref{alg:1}, we can see that, when the action or observation space is continuous, the tree generation degenerates and does not extend beyond a single layer of nodes since each simulation procedure would produce a new branch from the root node. 

\textbf{Double Progressive Widening (DPW):} DPW \citep{DBLP:conf/lion/CouetouxHSTB11, DBLP:conf/pkdd/AugerCT13} is a technique to extend the use of MCTS to continuous state and action spaces. Instead of exploring all possible actions and next states, DPW maintains a finite list of options to search at each decision node, incrementally adding new options to the list based on the visitation counts of that decision node. To be specific, for a node $(s, h)$, a new action $a$ is sampled (with the current policy $\pi$) and added to its children set $C((s, h))$, if $\left \lfloor {N((s, h))^\alpha}\right \rfloor \geq |C((s, h))|$, where $\alpha \in (0, 1)$ is a hyperparameter that controls the growth rate and $N$ denotes the visitation counts of $(s, h)$. Otherwise, an action is selected from existing options in $C((s, h))$ according to the UCT \citep{DBLP:conf/ecml/KocsisS06} rule. Similarly, to handle the infinitely many possible state transitions in stochastic environments, a new next state \(s'\) is added to the children set \(C((s, h), a)\) only if \(\left \lfloor {N((s, h), a)^\beta}\right \rfloor \geq |C((s, h), a)|\) (\(\beta \in (0, 1)\)). Otherwise, the least visited child in \(C((s, h), a)\) will be selected as the next state. \textbf{With DPW, the sets of possible actions or next states to explore are finite, allowing deep tree search as in discrete scenarios.} 
% The more promising states and actions (with higher $N$) have more subsequent branches, thereby reducing corresponding estimation uncertainty. 
% However, it also introduces bias; for example, the state transition \(\tilde{\mathcal{P}}_\theta(s'|s, a)\) differs from the true distribution \(\mathcal{P}_\theta(s'|s, a)\) due to the DPW rule.

\textbf{Integration of DPW and BAMCP:} Directly combining DPW and BAMCP (i.e., Algorithm \ref{alg:1}) cannot solve BAMDPs with continuous state and action spaces. As introduced in Appendix \ref{bamcp}, BAMCP relies on root sampling, which samples dynamics functions only at the root node and follows a specific dynamics function throughout a simulation rollout. However, the rationale of root sampling (i.e., Lemma \ref{lem:1}) does not hold when applying DPW\footnote{The last equality of Eq. (\ref{root_sampling}) does not hold, since $\Tilde{b}(\theta|has') \propto \tilde{b}(\theta|ha) \widetilde{\mathcal{P}}_\theta(s'|s, a) \neq \tilde{b}(\theta|ha) \mathcal{P}_\theta(s'|s, a)$. \(\widetilde{\mathcal{P}}_\theta(s'|s, a)\) represents the distribution of next states when applying DPW, which differs from the true distribution \(\mathcal{P}_\theta(s'|s, a)\), as dictated by the DPW rule.}.
As an alternative design, Polynomial Upper Confidence Tree (PUCT) \citep{DBLP:conf/pkdd/AugerCT13}, built upon DPW, is a provably consistent planning method for solving MDPs with infinite-scale state and action spaces and highly stochastic transition dynamics. Thus, we propose to recast P-BAMDPs as MDPs and solve them using PUCT, as a novel Bayes planning method for continuous state and action spaces.

\begin{algorithm*}[t]
  % \vspace{-.15in}
  \caption{Continuous BAMCP}
  \label{alg:2}
  \begin{multicols}{2}
    \begin{algorithmic}
      \State \textbf{Input:} $\pi$, $V$, $E$, $d_{\text{max}}$, $\gamma$, $\alpha$, $\beta$, $\mathcal{P}_\theta^{1:K}$, $\mathcal{R}_\theta^{1:K}$
      \Procedure{Search}{$(s, h), b(\theta)$}
        \For{$e=1 \cdots E$}
        \State \textproc{Simulate}($(s, h), b(\theta), d_{\text{max}}$)
        \EndFor
        \State $\pi_{\text{ret}}(a|(s, h)) \propto N((s, h), a)), a \in C((s, h)$
        \State $v_{\text{ret}} = \sum_{a \in C((s, h))} \frac{N((s, h), a)}{N((s, h))} Q((s, h), a)$
        \State {\color{blue} \Return $\pi_{\text{ret}}, v_{\text{ret}}$} 
      \EndProcedure
      \Procedure{Simulate}{$(s, h), b(\theta), d$}
        \IfThen{$d==0$}{{\color{blue}\Return $V((s, h))$}}
        \State $a \leftarrow \textproc{ActionPW}((s, h))$
        \State $r, s', b'(\theta) \leftarrow \textproc{StatePW}((s, h), b(\theta), a)$
        \State $N((s, h)) \mathrel{{+}{=}} 1, N((s, h), a) \mathrel{{+}{=}} 1$
        \If{{\color{blue}$N((s, h)) > 1$}}
        \State {\small $R \leftarrow \textproc{Simulate}((s', hars'), b'(\theta), d-1)$}
        \Else
        \State {\color{blue}{\small $R \leftarrow V((s', hars'))$}}
        \EndIf
        \State Access $\tilde{r}$ or calculate $\tilde{r}$ using Eq. (\ref{p-rwd})
        \State $R \leftarrow \tilde{r} + \gamma R$, cache $\tilde{r}$
        \State $Q((s, h), a) \mathrel{{+}{=}} \frac{R - Q((s, h), a)}{N((s, h), a)}$
        \State \Return $R$ 
      \EndProcedure
    \end{algorithmic}
    \columnbreak
    \begin{algorithmic}
      \Procedure{ActionPW}{$(s, h)$}
        \IfThen{first visit}{$C((s, h)) \leftarrow \emptyset$}
        \If{{\color{blue}$\left \lfloor {N((s, h))^\alpha}\right \rfloor \geq |C((s, h))|$}}
        \State $a \sim \pi(\cdot|(s, h))$
        \State $C((s, h)) \leftarrow C((s, h)) \cup \{a\}$
        \State $N((s, h), a), Q((s, h), a) \leftarrow 0, 0$
        \Else
        \State {\color{blue}$a \leftarrow \arg \max_{x \in C((s, h))} \tilde{Q}((s, h), x)$}
        \EndIf
        \State \Return $a$
      \EndProcedure
      \Procedure{StatePW}{$(s, h), b(\theta), a$}
        \IfThen{first visit}{$C((s, h), a) \leftarrow \emptyset$}
        \If{{\color{blue}$\left \lfloor {N((s, h), a)^\beta}\right \rfloor \geq |C((s, h), a)|$}}
        \State $r \sim \sum_{i=1}^K b(\theta)(i) \mathcal{R}_\theta^i(\cdot|s, a)$
        \State $s' \sim \sum_{i=1}^K b(\theta)(i) \mathcal{P}_\theta^i(\cdot|s, a)$
        \State Update $b(\theta)$ to $b'(\theta)$ using Eq. (\ref{b'})
        \State { $C((s, h), a) \leftarrow C((s, h), a) \cup \{(r, s', b'(\theta))\}$}
        \State $N((s', hars')) \leftarrow 0$
        \State \Return $r, s', b'(\theta)$
        \EndIf
        \State {\color{blue}\Return the least visited node in {\small $C((s, h), a)$}}
      \EndProcedure
    \end{algorithmic}
  \end{multicols}
  % \vspace{-.1in}
\end{algorithm*}

\textbf{Proposed algorithm -- Continuous BAMCP:} In Algorithm \ref{alg:2}, each simulation follows a path from the root node to an unvisited node, utilizing progressive widening when sampling actions or next states, as detailed in the \textproc{ActionPW} and \textproc{StatePW} procedures. Compared to PUCT, the main modifications include: (1) replacing \(\langle \mathcal{S}, \mathcal{P}, \mathcal{R} \rangle\) in MDPs with their extended definitions in BAMDPs, i.e., \(\langle \mathcal{S}^+, \mathcal{P}^+, \mathcal{R}^+ \rangle\), and (2) applying reward penalties to account for model uncertainty. To be specific, in \textproc{StatePW}, $r$ and $s'$ are sampled from the distribution predicted by all ensemble members, which is a practical implementation of sampling from $\mathcal{R}^+$ and $\mathcal{P}^+$ as outlined in Eq. (\ref{bamdp-mdp}). After receiving the transition $(s, a, r, s')$, the belief vector $b(\theta)$ is updated to $b'(\theta)$ following Eq. (\ref{b'}), finishing the transition in $\mathcal{S}^+$ from $(s, b(\theta))$ to $(s', b'(\theta))$. The transition history $h$ is then updated to $h'=hars'$. Secondly, in \textproc{Simulate}, a penalized reward $\tilde{r}$ (i.e., Eq. (\ref{p-rwd})) is used, which effectively mitigates overfitting to inaccurate ensemble members by incorporating pessimism, as discussed in Section \ref{DeepEns}.

% \paragraph{Theoretical Guarantees for Continuous BAMCP}
% To provide a solid theoretical foundation for our algorithm, we now formalize the consistency of our core planning component, Continuous BAMCP. 
The following theorem establishes that our planner is a reliable and consistent solver for P-BAMDPs defined in Section \ref{DeepEns}. The detailed proof is provided in Appendix~\ref{proof}. This consistency guarantee is crucial, as it ensures that with sufficient computation, our planner's output converges to the true optimal value within the pessimistic model, providing a reliable value target for our overall learning framework.
% The following theorem establishes that our planner is a reliable and consistent solver for the P-BAMDP problem defined in Section \ref{DeepEns}. This consistency guarantee is crucial, as it ensures that with sufficient computation, our planner's output converges to the true optimal value within the pessimistic model, providing a stable target for our overall learning framework.

\begin{theorem}
    \label{thm1}
    Let $\hat{V}(z_0)$ be the value of the root node $z_0$ estimated by \textbf{Continuous BAMCP} after $E$ simulations. Under the regularity and value function conditions defined in Appendix~\ref{proof.1}, the error of this estimate with respect to the P-BAMDP's optimal value $V^*_p(z_0)$ is bounded as follows, exponentially surely in $E$:
    $$
    \left| \hat{V}(z_0)-V^*_p(z_0)\right| \le C_0E^{-\gamma_0}+\gamma^{d_{max}}\epsilon_V,$$
    where $\gamma_0, C_0 > 0$ are constants, $\gamma$ is the discount factor, $d_{max}$ is the maximum search depth, and $\epsilon_V$ is the uniform bound on the value approximation error at the leaf nodes.
    % \footnote{Note that $\gamma_0$ in the exponent is the convergence rate factor from the proof, whereas $\gamma$ in the bias term is the MDP's discount factor.}
\end{theorem}

Algorithm \ref{alg:2} is used to approximate the Bayes-optimal policy\footnote{The value consistency guaranteed by Theorem~\ref{thm1} implies that the returned policy $\pi_{\text{ret}}$ is near-optimal for the P-BAMDP. Since our planner derives $\pi_{\text{ret}}$ through a pessimistic planning process, the pessimistic planning theory \citep{DBLP:journals/corr/abs-2012-15085, DBLP:conf/icml/SunZJLYY23} ensures that a near-optimal policy in the P-BAMDP has a bounded sub-optimality in the non-pessimistic BAMDP. Finally, according to the foundational theory of BAMDPs \cite{DBLP:journals/jair/GuezSD13}, a near-optimal policy for the BAMDP is, by definition, a near-Bayes-optimal policy for the real environment.} at $(s, h)$, which is $\pi_{\text{ret}}(a|(s, h)) \propto N((s, h), a), a \in C(s, h)$ \citep{DBLP:conf/pkdd/AugerCT13}. However, we aim to solve the entire P-BAMDP offline, eliminating the need for anything beyond simple inference using the policy network during deployment. This necessitates a well-learned policy function at each decision node, but we cannot execute Algorithm \ref{alg:2} at every $(s, h)$ due to the scale of the state space. Therefore, we integrate the planning algorithm into a policy iteration framework as introduced in the next subsection. In this case, $\pi$ and $V$ in Algorithm \ref{alg:2} denote the policy and value functions from the previous learning iteration\footnote{\(\pi\) and \(V\) are functions of \((s, h)\) because the states in BAMDPs include both \(s\) and the corresponding belief \(b(\theta)\). \(b(\theta)\) is recursively updated using the history \(h\) and is a function of \(h\).}; while $\pi_{\text{ret}}$ and $v_{\text{ret}}$ are the improved policy and value estimates for specific decision nodes. As additional details, multiple terms (labeled in blue) in Algorithm \ref{alg:2} have alternative designs across different literatures, which we elaborate on in Appendix \ref{alg1_details}.

\begin{algorithm}[t!]
  \caption{BA-MCTS}
  \label{alg:3}
  \begin{multicols}{2}
    \begin{algorithmic}  % 使用单个algorithmic环境，并添加行号
    \State \textbf{Input:} $T$, $E_l$, $\mathcal{P}_\theta^{1:K}$, $\mathcal{R}_\theta^{1:K}$
    \State Initialize the actor $\pi$ and critic $Q_{\psi}$, $\mathcal{D} \leftarrow \emptyset$
    
    \Procedure{Learner}{}
      \State $e \leftarrow 0$
      \While{true}
        \State $\{(s, h, \pi_{\text{ret}}, \tilde{r}, s', h')_i\}_{i=1}^{B} \sim \mathcal{D}$
        % \State Update critics $\{Q_{\psi_k}\}$ by minimizing TD error 
        % \State Update actor $\pi_\phi$ using policy gradient
        \State Update $\pi$ and $Q_{\psi}$ using SAC
        \State $e \mathrel{{+}{=}} 1$ 
        \State Update $\pi$ and $V$ (defined on $Q_{\psi}$) in \textproc{Actor} if 
        \Statex \qquad\quad $e \% E_l == 0$
      \EndWhile
    \EndProcedure
    \end{algorithmic}
    \columnbreak
    \begin{algorithmic}
    \Procedure{Actor}{}
      \While{true}
        \State Sample $s$ from $\mathcal{D}_\mu$, $h \leftarrow s$, $\tau \leftarrow []$
        \State Obtain the prior $b(\theta)$ at $s$
        \For{$t=1 \cdots H$}
          \State $\pi_{\text{ret}}, v_{\text{ret}} \leftarrow \textproc{Search}((s, h), b(\theta))$
          \State $a \sim \pi_{\text{ret}}(\cdot|(s, h))$
          \State Acquire $r, s', b'(\theta)$ as in {\small \textproc{StatePW}}
          \State Calculate $\tilde{r}$ using Eq. (\ref{p-rwd})
          \State Append $\tau$ with $((s, h), \tilde{r}, \pi_{\text{ret}}, v_{\text{ret}})$
          \State $s, h, b(\theta) \leftarrow s', hars', b'(\theta)$
        \EndFor
        \State $\mathcal{D} \leftarrow \mathcal{D} \cup \{\tau\}$
      \EndWhile
    \EndProcedure
  \end{algorithmic}
  \end{multicols}
  
\end{algorithm}

% \State {\scriptsize $\mathcal{L}(V, \{\tau_i\}_{i=1}^{B}) = \sum_{((s, h), z)} (V(s, h) - z)^2 / (BT)$}, 
      % \State {\scriptsize $z = \tilde{r}^{(0)} + \gamma \tilde{r}^{(1)} + \cdots + \gamma^{n-1} \tilde{r}^{(n-1)} + \gamma^{n} v_{\text{ret}}^{(n)}$}

\subsection{Overall Framework: BA-MCTS} \label{SBPI}

% The algorithm introduced in the previous subsection is an online method, while the overall algorithm should be an offline one so that only policy inference is required during the execution.

Now, we present how to integrate planning outcomes of continuous BAMCP into a policy iteration framework, to obtain a near Bayes-optimal policy, i.e., $\pi$, that can be directly referred to during execution. The pseudo code is shown as Algorithm \ref{alg:3}. For efficiency, a learner and a number of actors execute in parallel, reading from and sending data to the replay buffer $\mathcal{D}$ respectively. The actors update their copies of policy and value functions every $E_l$ learner steps.
% (inspired by MuZero\footnote{The novelty of our algorithm in comparison to MuZero is detailed in Section \ref{rel_works}.})

Each actor interacts with the learned world models to sample trajectory segments $\tau$. The starting states of these segments are sampled from the provided dataset $\mathcal{D}_\mu$. Notably: (1) the segment length $H$ is kept short to minimize error accumulation when interacting with the learned world models; and (2) the prior $b(\theta)$ at a starting state $s$ is obtained by performing the Bayesian posterior update (i.e., Eq. (\ref{b'})) on the offline trajectory in $\mathcal{D}_\mu$ containing $s$. These belief updates are reliable, as the offline trajectories are from the real environment. At each decision node $(s, h)$ of the segment, a \textproc{Search} procedure (defined in Algorithm \ref{alg:2}) is executed. The search result $\pi_{\text{ret}}$ then indicates the action choice, i.e., $a \sim \pi_{\text{ret}}(\cdot|(s, h))$. Given $a$, the transition to the next state follows a BAMDP, where $r \sim \mathcal{R}^+(\cdot|(s, h), a)$, $s' \sim \mathcal{P}^+(\cdot|(s, h), a)$, and the belief $b({\theta})$ is adapted with the new transition. 

% \footnote{The search result $v_{\text{ret}}$ can be used to construct the value target, but we didn't observe performance improvements from it.}

With the collected segments $\tau$, we update the policy $\pi$ and value function $V$. Our primary algorithm, BA-MCTS, employs an off-policy RL method: SAC \citep{DBLP:conf/icml/HaarnojaZAL18}. The value function (critic) is updated using standard temporal difference learning based on the sampled transitions, while the policy (actor) is updated using the policy gradient derived from the critic. Additionally, we investigate an alternative approach, BA-MCTS-SL, where the policy is updated via supervised learning to mimic the search result $\pi_{\text{ret}}$ by minimizing a cross-entropy loss: $\mathcal{L}(\pi, \{(s, h, \pi_{\text{ret}})_i\}_{i=1}^{B}) = -\sum_{((s, h), \pi_{\text{ret}})} \pi_{\text{ret}}^T \log \pi(\cdot|(s, h)) / B$. We compare these two approaches in our evaluation. In both methods, we seek to distill the superior, non-parametric search outcomes from Continuous BAMCP into the parameterized policy $\pi$.

% represents the policy improvement process. 
% Meanwhile, value backups to update the Q estimates (i.e., $R \leftarrow V((s', hars'))$ in Algorithm \ref{alg:2}) and the temporal difference learning for the value function $V$ constitute the policy evaluation process. 
% In this way, the search algorithm is used as a powerful improvement operator to iteratively enhance the performance of the learned policy \(\pi\).

% using SAC instead of 
% predict the $n$-step pessimistic return $z$.\footnote{In the definition of $z$, $\tilde{r}^{(0)}=\tilde{r}$ is the pessimistic reward associated with $(s, h)$ and the subscript $(i)$ denotes that the corresponding quantity is collected in $i$ time steps within that trajectory.}

\section{Evaluation}
\label{sec:eval}
\vspace{-0.1in}
% In this section, we conduct a series of experiments to evaluate our proposed framework. We aim to answer the following research questions: 
% (\textbf{RQ1}) Would using a Bayes-Adaptive MDP (BAMDP) to explicitly model belief over an ensemble of models improve policy performance? 
% (\textbf{RQ2}) Can our proposed MCTS-based search method, {Continuous BAMCP}, further enhance performance when integrated as a policy improvement operator? 
% (\textbf{RQ3}) How can the search outcomes (i.e., $\pi_{\text{ret}}$) be effectively used for policy updates?
% (\textbf{RQ4}) Is it necessary to apply pessimistic reward penalties to mitigate model exploitation? 
% (\textbf{RQ5}) Does our search-based algorithm outperform other deep-search-based methods, such as MuZero, in the offline setting? 
% (\textbf{RQ6}) Can the proposed algorithm be applied to complex, real-world data-driven control tasks?

Section~\ref{sec:benefit_adaptation} discusses the benefit of Bayesian model adaptation. Section~\ref{sec:d4rl_benchmark} presents comprehensive benchmarking results on D4RL MuJoCo tasks. Section~\ref{sec:ablation} details our ablation studies. Finally, Section~\ref{sec:tokamak} validates our algorithm's applicability on a challenging Tokamak control task.
\vspace{-0.1in}

\begin{table}[t]
\centering
\caption{Comparison of the prediction errors for next states and rewards in imaginary rollouts, with and without Bayesian belief adaptation. Ground truth for the imaginary rollouts is obtained by replaying the action sequences in the real simulators.}
\vspace{.10in}
\label{table:prediction_error_main}
\resizebox{\textwidth}{!}{%
\begin{tabular}{c|c|c|c|c|c|c|c}
\hline
\multirow{2}{*}{Data Type} & \multirow{2}{*}{Environment} & \multicolumn{2}{c|}{Prediction Error on Next State} & \multicolumn{2}{c|}{Prediction Error on Reward} & \multicolumn{2}{c}{Overall Prediction Error} \\
\cline{3-8}
& & Adaptive & Uniform & Adaptive & Uniform & Adaptive & Uniform\\
\hline
\hline
{random} & {HalfCheetah} & {\textbf{0.339} (.021)} & {0.342 (.017)} & {\textbf{0.016} (.001)} & {\textbf{0.016} (.001)} & {\textbf{0.177} (.011)} & {0.179 (.009)} \\
{random} & {Hopper} & {0.232 (.016)} & {\textbf{0.189} (.008)} & {\textbf{0.012} (.001)} & {0.022 (.008)} & {0.122 (.008)} & {\textbf{0.106} (.008)}\\
{random} & {Walker2d} & {\textbf{62.10} (11.0)} & {112.0 (19.0)} & {\textbf{1.364} (.215)} & {4.908 (.708)} & {\textbf{31.73} (5.61)} & {58.44 (9.83)}\\
\hline
{medium} & {HalfCheetah} & {1.387 (.040)} & {\textbf{1.355} (.038)} & {\textbf{0.057} (.001)} & {0.061 (.001)} & {0.722 (.021)} & {\textbf{0.708} (.019)}\\
{medium} & {Hopper} & {\textbf{0.429} (.033)} & {0.570 (.035)} & {\textbf{19.03} (8.58)} & {68.24 (11.4)} & {\textbf{9.727} (4.30)} & {34.40 (5.69)}\\
{medium} & {Walker2d} & {\textbf{34.01} (.556)} & {34.26 (.142)} & {\textbf{24.38} (4.31)} & {113.1 (3.47)} & {\textbf{29.19} (1.89)} & {73.69 (1.78)}\\
\hline
{med-replay} & {HalfCheetah} & {\textbf{0.677} (.027)} & {0.707 (.036)} & {0.115 (.005)} & {\textbf{0.114} (.006)} & {\textbf{0.396} (.016)} & {0.410 (.021)}\\
{med-replay} & {Hopper} & {\textbf{0.170} (.022)} & {0.212 (.054)} & {\textbf{1.177} (.420)} & {3.320 (1.14)} & {\textbf{0.674} (.221)} & {1.766 (.558)} \\
{med-replay} & {Walker2d} & {66.83 (7.39)} & {\textbf{44.61} (1.34)} & {\textbf{27.21}(3.24)} & {60.52 (3.17)} & {\textbf{47.02} (5.17)} & {52.57 (2.25)}\\
\hline
{med-expert} & {HalfCheetah} & {\textbf{1.423} (.054)} & {1.467 (.046)} & {\textbf{0.071} (.002)} & {0.074 (.002)} & {\textbf{0.747} (.028)} & {0.771 (.024)}\\
{med-expert} & {Hopper} & {\textbf{3.665} (2.56)} & {18.27 (2.09)} & {\textbf{71.46} (79.9)} & {384.7 (37.6)} & {\textbf{37.56} (41.2)} & {201.5 (19.4)}\\
{med-expert} & {Walker2d} & {55.69 (3.71)} & { \textbf{51.89} (.481)} & {\textbf{121.9} (12.6)} & {186.3 (3.86)} & {\textbf{88.77} (8.18)} & {119.1 (2.08)}\\
\hline
\end{tabular}%
}
\vspace{-0.10in}
\end{table}

\subsection{The Benefit of Belief Adaptation}
\label{sec:benefit_adaptation}
\vspace{-0.1in}

% Here, we investigate whether explicitly modeling and adapting belief over an ensemble of models using a BAMDP framework improves performance. 

The core idea of belief adaptation (using the BAMDP framework detailed in Section \ref{DeepEns}) is that by dynamically adjusting the belief over each ensemble member based on observed transitions, the agent can rely more on models that are accurate for the current trajectory, leading to better planning.
Our empirical results show that belief adaptation not only improves model prediction accuracy, as shown in Table~\ref{table:prediction_error_main}, but also translates to better final policy performance on offline RL tasks, as shown in Tables \ref{table:1} and \ref{table:tokamak}. Specifically, Table~\ref{table:prediction_error_main} highlights that the adaptive ensemble achieves more accurate predictions for next states and rewards in 10 out of 12 benchmarking tasks, compared to treating each ensemble member uniformly as in previous offline MBRL methods. This suggests that Bayesian belief adaptation leads to a more accurate world model for planning, which is a key factor underlying observed performance improvements. \textbf{A more detailed study of this aspect, including a visualization of the belief adaptation process within rollouts, is provided in Appendix~\ref{BeliefAdap}.}

\subsection{Benchmarking Results on D4RL MuJoCo}
\label{sec:d4rl_benchmark}
\vspace{-0.1in}

We evaluate our algorithm on a widely-used offline RL benchmark: D4RL MuJoCo \citep{DBLP:journals/corr/abs-2004-07219}, comparing its performance with state-of-the-art (SOTA) model-based/model-free methods, and search-based approaches.

\textbf{Model-Based Methods.} In Table~\ref{table:1}, BA-MCTS achieves the highest average score across all twelve D4RL MuJoCo tasks when compared against a range of SOTA model-based baselines. This result confirms that the combination of a principled Bayesian treatment of uncertainty and a powerful search-based planner leads to superior performance.

\begin{table*}[htbp]
\small
\centering
\caption{Comparisons of our algorithm against SOTA offline RL methods on the D4RL benchmark. Each value represents the normalized score, as proposed in \citet{DBLP:journals/corr/abs-2004-07219}, of the policy trained by the algorithm. For our algorithm, we report the average score of the final ten learning epochs and its standard deviation across three random seeds. To ensure a fair comparison, the results for MOBILE and CBOP are from our own runs using their publicly codebases. The results for all other baselines are taken from their respective original papers. Please check Appendix \ref{AddEval} for the references.}
\vspace{.10in}
\label{table:1}
\resizebox{\textwidth}{!}{%
\begin{tabular}{c|c|c|c|c|c|c|c|c}
\hline
{\makecell{Data Type}} & {Environment} & {\makecell{BA-MCTS (ours)}} & {MOBILE} & {CBOP} & {RAMBO} & {APE-V} & {MAPLE} & {Optimized}\\
\hline
\hline
{random} & {HalfCheetah} & {41.45 (1.40)} & {39.27 (1.38)} & {32.95 (0.14)} & {40.0} & {29.9} & {\textbf{41.5}} & {31.7} \\
{random} & {Hopper} & {\textbf{31.42} (0.10)} & {7.965 (0.43)} & {23.79 (11.2)} & {21.6} & {31.3} & {10.7} & {12.1} \\
{random} & {Walker2d} & {21.54 (0.06)} & {21.47 (0.03)} & {1.391 (0.06)} & {11.5} & {15.5} & {\textbf{22.1}} & {21.7} \\
\hline
{medium} & {HalfCheetah} & {77.31 (1.40)} & {76.42 (1.00)} & {69.98 (0.88)} & {\textbf{77.6}} & {69.1} & {48.5} & {45.7} \\
{medium} & {Hopper} & {\textbf{103.9} (0.33)} & {102.5 (1.77)} & {98.68 (2.30)} & {92.8} & {-} & {44.1} & {69.3} \\
{medium} & {Walker2d} & {88.23 (0.78)} & {87.12 (2.79)} & {\textbf{93.92} (1.92)} & {86.9} & {90.3} & {81.3} & {79.7} \\
\hline
{med-replay} & {HalfCheetah} & {\textbf{71.24} (2.65)} & {64.67 (8.14)} & {63.36 (0.47)} & {68.9} & {64.6} & {69.5} & {58.0} \\
{med-replay} & {Hopper} & {\textbf{106.4} (0.53)} & {105.7 (1.56)} & {101.3 (1.74)} & {96.6} & {98.5} & {85.0} & {90.8} \\
{med-replay} & {Walker2d} & {\textbf{91.42} (1.54)} & {85.12 (10.1)} & {74.56 (1.29)} & {85.0} & {82.9} & {75.4} & {65.8} \\
\hline
{med-expert} & {HalfCheetah} & {102.3 (2.27)} & {100.9 (4.00)} & {97.91 (5.26)} & {93.7} & {101.4} & {55.4} & {\textbf{104.2}} \\
{med-expert} & {Hopper} & {111.8 (0.82)} & {\textbf{111.9} (0.34)} & {111.2 (0.77)} & {83.3} & {105.7} & {95.3} & {105.8} \\
{med-expert} & {Walker2d} & {116.0 (1.49)} & {115.0 (0.30)} & {\textbf{117.6} (0.27)} & {68.3} & {110.0} & {107.0} & {97.1} \\
\hline
\hline
\multicolumn{2}{c|}{Average Score} & {\textbf{80.25}} & {76.50} & {73.87} & {68.85} & {72.65} & {61.32}  & {65.16} \\
\hline
\end{tabular}%
}
\vspace{-.15in}
\end{table*}

\textbf{Model-Free Methods.}
\textbf{We also compare our algorithms with a series of model-free offline policy learning methods, with detailed results presented in Appendix~\ref{ComMF}.} Our algorithms show significantly better performance, demonstrating the necessity of model-based learning in these environments, particularly when data quality is low. Notably, APE-V (see Table~\ref{table:1}) is a model-free method built on BAMDPs, yet it shows inferior performance compared to ours in 10 out of 12 tasks. As a summary, in Table~\ref{tab:sota_comprehensive_main}\footnote{MOReL, IQL, and the model-free offline RL baselines in Appendix~\ref{ComMF} did not report standard deviations. Additionally, IQL and DT were not evaluated on D4RL tasks with random data.}, we show comparisons against 19 strong model-based and model-free baselines, where {BA-MCTS} achieves the highest average score. (\textbf{See Appendix \ref{sec:sota_comp} for details of the selected baselines.}) To quantify the significance of this improvement, we compute Cohen's $d$ \citep{cohen1988statistical} against all baselines for which standard deviations are available. The results show that the Cohen's $d$ value is greater than 1.8 in all cases. According to Cohen’s Rule of Thumb, a value of $d >1$ indicates a very large (statistically significant) difference. This comprehensive comparison demonstrates that our proposed method establishes a new SOTA in offline RL.

% Table 10 from the original appendix
\begin{table*}[h!]
\centering
\scriptsize
\vspace{-0.1in}
\caption{Comprehensive performance comparison on D4RL MuJoCo. Average scores for our algorithm and baselines are reported. For baselines that report standard deviations, we compute Cohen's $d$  to quantify the statistical significance of the improvement achieved by our method. A value of $d > 1$ indicates a very large and statistically significant difference. }
\label{tab:sota_comprehensive_main}
\vspace{.10in}

\begin{tabular}{l|ccccccccccc}
\toprule
Algorithm & \textbf{BA-MCTS} & {Optimized} & {COMBO} & {MOReL} & {MOPO} & {CQL} & {BEAR}  \\
\midrule
Avg. Score & \textbf{80.3 (1.1)} & 65.2 (5.4) & 66.8 (3.4) & 64.4 & 36.7 (11.0) & 54.9 & 39.8  \\
Avg. (No Random) & \textbf{96.5 (1.3)} & - & - & - & - & - & - \\
Cohen's $d$ & - & 3.5 & 4.6 & - & 4.7 & - & - \\
\midrule \midrule
Algorithm & {BRAC-v} & {SAC} & {BC} & {MOBILE} & {CBOP} & {RAMBO} & {APE-V}\\
\midrule
Avg. Score & 31.4 & 4.1 & 25.1 & 76.5 (2.7) & 73.9 (2.2) & 68.9 (2.8) & 72.7 (1.8) \\
Avg. (No Random)  & - & - & - & - & - & - & -  \\
Cohen's $d$  & - & - & - & 1.8 & 3.7 & 4.8 & 4.8 \\
\midrule \midrule
Algorithm & {MAPLE} & {TD3+BC} & {IQL} & {SAC-N} & {EDAC} & {DT} &  \\
\midrule
Avg. Score & 61.3 (1.8) & 54.6 (2.4) & - & 76.3 (0.6) & 76.0 (2.3) & - & \\
Avg. (No Random)  & - & - & 77.0 & - & - & 74.7 (1.2) &  \\
Cohen's $d$  & 12.7 & 12.4 & - & 4.8 & 2.2 & 17.4 & \\
\bottomrule
\end{tabular}
\vspace{-.05in}
\end{table*}

\begin{table*}[t]
\small
\centering
\caption{Comparisons on the target tracking tasks. For each algorithm, we report the average return of the final ten policy learning epochs and its standard deviation across three different random seeds.}
\vspace{.10in}
\label{table:tokamak}
\resizebox{\textwidth}{!}{%
\begin{tabular}{c|c|c|c|c|c}
\hline
{Task} & {\makecell{BA-MCTS (Ours)}} & {BA-MCTS-SL (Ablation)} & {\makecell{BA-MBRL (Ablation)}} & {CQL} & {Optimized}\\
\hline
\hline
{Temperature} & {-23.83 (9.66)} & {\textbf{-21.16} (5.00)} & {-29.35 (4.72)} & {-59.62 (1.57)} & {-83.55 (10.56)} \\
{Rotation} & {-19.07 (5.85)} & {\textbf{-14.14} (1.88)} & {-31.33 (11.54)} & {-85.48 (2.72)} & {-71.54 (9.88)} \\
{$\beta_{n}$} & {\textbf{-18.93} (1.75)} & {-37.03 (17.98)} &  {-23.40 (10.77)} & {-36.37 (1.17)} & {-57.84 (10.27)} \\
\hline
\hline
{Average} & {\textbf{-20.61}} & {-24.11} & {-28.03} & {-60.49} & {-70.98} \\
\hline
\end{tabular}%
}
\vspace{-.15in}
\end{table*}

\textbf{Search-Based Methods.} MuZero also applies deep search to MBRL. To evaluate its performance on D4RL MuJoCo, we use the open-source implementation and hyperparameter configurations of Sampled EfficientZero \citep{DBLP:conf/nips/YeLKAG21} provided by LightZero \citep{DBLP:conf/nips/NiuPYLZRHLL23}. Benchmarking results from LightZero indicate that Sampled EfficientZero achieves the best performance on (online) MuJoCo tasks compared to other MuZero variants. To adapt Sampled EfficientZero for offline learning, we employ the reanalyse technique proposed by \citet{DBLP:conf/nips/SchrittwieserHM21}. \textbf{The evaluation results are presented in Figure~\ref{fig:muzero_perf} (Appendix~\ref{CompMuzero}).} As shown, the results are significantly worse compared to the performance of offline RL methods listed in Table \ref{table:1}, despite Sampled EfficientZero's much higher computational cost. \textbf{In Appendix~\ref{CompMuzero}, we provide a detailed comparison of the computational costs and algorithm designs between our method and Sampled EfficientZero.} 
% The performance gap may be attributed to the challenges of applying MuZero's design to the offline setting. 
World model learning is the foundation of MBRL and can be particularly challenging in continuous control and offline settings, where the state-action space is vast but training data is limited. Sampled EfficientZero integrates model learning and policy training into a single stage, which significantly increases the learning difficulty. 
% Furthermore, for continuous control tasks, the agent can only sample a finite number of actions at a decision point, and the search result (e.g., $\pi_{\text{ret}}$ in Algorithm~2) is a distribution over this finite set, which could be a poor approximation of the optimal action distribution. Thus, purely mimicking the search result may be less sample-efficient than policy gradient methods, as it fails to account for the continuous nature of the action space.

\subsection{Ablation Study}
\label{sec:ablation}
\vspace{-0.05in}

For a principled ablation study, we reimplement BA-MCTS and its ablated variants based on the codebase of Optimized~\citep{DBLP:conf/iclr/LuBPOR22}, which thoroughly explores design choices in offline MBRL, making minimal changes to the code and hyperparameter settings (see Appendix \ref{KeyPara}). Specifically, we first add a belief adaptation module to Optimized to obtain \textbf{BA-MBRL}, and then incorporate the Continuous BAMCP planner (Section~\ref{ContBAMCP}) to create \textbf{BA-MCTS}. Finally, we replace the policy distillation module of BA-MCTS from SAC to supervised learning (as detailed in Section~\ref{SBPI}), yielding \textbf{BA-MCTS-SL}. The detailed results are presented in Table~\ref{tab:ablation_belief}, and Appendices~\ref{MCTSAdap} - \ref{sec:rq3_discussion}. BA-MBRL significantly outperforms Optimized, demonstrating the necessity of Bayesian model adaptation. BA-MCTS further improves upon the performance of BA-MBRL, underscoring the effectiveness of employing deep search as a powerful policy improvement operator. Finally, BA-MCTS-SL achieves performance comparable to BA-MCTS. However, BA-MCTS-SL can be less stable on certain complex tasks: it struggles on the Walker2d environments, where a warm-up phase (using BA-MBRL) is needed to establish a good initial policy for supervised learning to be effective.
Finally, we conduct an ablation study to assess the necessity of the reward penalty. By removing this penalty term (i.e., setting $\lambda=0$ in Eq. \eqref{p-rwd}), we observe a consistent drop in average performance. This demonstrates that the pessimistic reward penalty is crucial for stopping the agent exploiting inaccuracies in the learned world models. \textbf{Results are in Table \ref{table:8} in Appendix \ref{ASRP}.}

\subsection{Application to Tokamak Control}
\label{sec:tokamak}
\vspace{-0.05in}

We evaluate our algorithm on three target tracking tasks in tokamak control. The tokamak is one of the most promising confinement devices for achieving controllable nuclear fusion. We use a well-trained data-driven dynamics model provided by \citet{DBLP:journals/corr/abs-2404-12416} as a ``ground truth" simulator for the nuclear fusion process during evaluation, and generate a dataset (by this dynamics model) containing 725,270 transitions for offline RL. We select a reference shot (i.e., an episode of a fusion process) from DIII-D (a tokamak device located in San Diego), and use its trajectories of Ion Rotation, Electron Temperature, and $\beta_n$ as targets for three tracking tasks. The tracking tasks have a 28-dimensional state space and a 14-dimensional action space, both continuous. Moreover, these tasks are \textbf{highly stochastic}, as the underlying dynamics model is a probabilistic neural network and each state transition is a sample from this model. \textbf{For details on the simulator, and the design of the state/action spaces and reward functions, please refer to Appendix~\ref{DetTCT}}.

We obtained access to the DIII-D data for a limited time window, which restricted our ability to provide evaluation results across a wide range of algorithms. For a principled comparison with the baselines, we utilized the codebase from the ablation study, where our algorithm implementations are primarily based on Optimized, to ensure that performance improvements are attributable to algorithmic design rather than implementation differences. Additionally, we included evaluation of a widely-used model-free offline RL method: CQL. The average tracking errors over the final 10 training epochs are reported in Table~\ref{table:tokamak}, where BA-MCTS outperforms all baselines. These results demonstrate the potential of our algorithm for application in complex, data-driven control problems.

\section{Conclusion}
\vspace{-0.05in}

We propose framing offline MBRL as a BAMDP to better address uncertainties in the world models learned from offline datasets. We also introduce a novel planning method for solving BAMDPs in continuous state and action spaces using MCTS. This planning process is integrated into a policy iteration framework, enabling the derivation of a policy function for real-time execution. In our evaluation, we test several variants of our algorithms to separately highlight the effectiveness of Bayesian RL and deep search. Additionally, we validate the superior performance of our approach on both standard offline RL benchmarks and challenging continuous control tasks in Nuclear Fusion.

\bibliography{main}
\bibliographystyle{iclr2026_conference}

\newpage
\appendix

\section{BAMCP} \label{bamcp}

\begin{algorithm*}[h]
  \caption{BAMCP}
  \label{alg:1}
  \begin{multicols}{2}
    \begin{algorithmic}
      \State
      \State \textbf{Input:} $\pi$, $E$, $d_{\text{max}}$, $\mathcal{R}$, $\gamma$, $\mathcal{A}$, $c$
      \State
      \Procedure{Search}{$(s, h), b(\theta)$}
        \For{$e=1 \cdots E$}
        \State $\theta \sim b(\cdot)$
        \State \textproc{Simulate}($(s, h), \theta, d_{\text{max}}$)
        \EndFor
        \State \Return $\arg \max_{a} Q((s, h), a)$
      \EndProcedure
      \State
      \Procedure{Rollout}{$(s, h), \theta, d$}
        \If{$d==0$}
        \State \Return 0
        \EndIf
        \State $a \sim \pi(\cdot|(s, h))$
        \State $s' \sim \mathcal{P}_\theta(\cdot|s, a)$
        \State $r \leftarrow \mathcal{R}(s, a)$
        \State {\small $R \leftarrow r + \gamma\textproc{Rollout}((s', has'), \theta, d - 1)$}
        \State \Return $R$
      \EndProcedure
    \end{algorithmic}
    \columnbreak
    \begin{algorithmic}
      \Procedure{Simulate}{$(s, h), \theta, d$}
        \If{$d==0$}
        \State \Return 0
        \EndIf
        \If{$N((s, h))==0$}
        \For{$a \in \mathcal{A}$}
        \State $N((s, h), a), Q((s, h), a) \leftarrow 0, 0$
        \EndFor
        \State $a \sim \pi(\cdot|(s, h))$
        \State $s' \sim \mathcal{P}_\theta(\cdot|s, a), r \leftarrow \mathcal{R}(s, a)$
        \State {\small $R \leftarrow r + \gamma \textproc{Rollout}((s', has'), \theta, d - 1)$}
        \State $N((s, h)), N((s, h), a) \leftarrow 1, 1$
        \State $Q((s, h), a) \leftarrow R$
        \State \Return $R$
        \EndIf
        \State {\small $a \leftarrow \arg \max_{x} Q((s, h), x) + c \sqrt{\frac{\log N((s, h))}{N((s, h), x)}}$} 
        \State $s' \sim \mathcal{P}_\theta(\cdot|s, a), r \leftarrow \mathcal{R}(s, a)$
        \State {\small $R \leftarrow r + \gamma \textproc{Simulate}((s', has'), \theta, d - 1)$}
        \State $N((s, h)) \mathrel{{+}{=}} 1, N((s, h), a) \mathrel{{+}{=}} 1$
        \State $Q((s, h), a) \mathrel{{+}{=}} \frac{R - Q((s, h), a)}{N((s, h), a)}$
        \State \Return $R$ 
      \EndProcedure
    \end{algorithmic}
  \end{multicols}
\end{algorithm*}

Bayes Adaptive Monte Carlo Planning (BAMCP) \citep{DBLP:journals/jair/GuezSD13} is a sample-based online planning method, aiming to find the action $a^*$ that approximately maximizes the expected return at a decision point $(s, h)$ under the BAMDP. Its detailed pseudo code is shown as Algorithm \ref{alg:1}. BAMCP has demonstrated success in solving BAMDPs with large-scale \textbf{discrete} state and action spaces. Its key algorithmic ideas include: (1) applying MCTS with an efficient exploration strategy -- UCT \citep{DBLP:conf/ecml/KocsisS06} to the BAMDP in order to simulate the outcomes of different action choices; (2) utilizing root sampling to avoid frequent Bayesian posterior updates. Specifically, the UCT rule is used for selecting actions at non-leaf nodes, i.e., {\small $a \leftarrow \arg \max_{x} Q((s, h), x) + c \sqrt{\frac{\log N((s, h))}{N((s, h), x)}}$}, managing the tradeoff between exploration and exploitation. (As shown in Algorithm \ref{alg:1}, $N$ is the visit count, while $Q$ denotes the accumulated reward.) Root sampling refers to sampling the dynamics model only at the root node (i.e., $\theta \sim b(\cdot)$) and not adapting the belief $b(\cdot)$ according to the Bayes rule and experience during the search process, of which the rationality is justified in the following lemma.
\begin{lemma} \label{lem:1}
For all suffix histories $h'$ of $h$, $b(\theta|h') = \Tilde{b}(\theta|h')$. Here, $b(\theta|h')$ is the true posterior probability of $\theta$ at the decision point $h'$, while $\Tilde{b}(\theta|h')$ is the probability of experiencing $\theta$ at $h'$ when using root sampling.
\end{lemma}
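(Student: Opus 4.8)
The plan is to prove the identity by induction on the length of the suffix history $h'$, using only the definition of root sampling and the fact that action selection in BAMCP never depends on the sampled parameter $\theta$.

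For the base case $h' = h$, root sampling draws $\theta \sim b(\cdot\mid h)$ by construction, so $\tilde b(\theta\mid h) = b(\theta\mid h)$ trivially. For the inductive step, assume $\tilde b(\theta\mid h') = b(\theta\mid h')$ for a suffix history $h'$ ending in physical state $s$, and consider an extension $h'' = h' a s'$. The key observation is that the action $a$ taken at node $(s,h')$ — whether selected by the UCT rule inside the tree or sampled from the rollout policy $\pi(\cdot\mid(s,h'))$ below it — is a function only of $(s,h')$ and of the statistics $N,Q$ accumulated from earlier simulations; it never inspects the current $\theta$. Hence $a$ is conditionally independent of $\theta$ given $h'$, and the joint law of $(\theta,s')$ at $h'$ under root sampling is $\tilde b(\theta\mid h')\,\mathcal{P}_\theta(s'\mid s,a)$ times the $\theta$-independent factor $P(a\mid s,h')$. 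Normalizing over $\theta$ gives
\[
\tilde b(\theta\mid h'') \;=\; \frac{\tilde b(\theta\mid h')\,\mathcal{P}_\theta(s'\mid s,a)}{\int \tilde b(\theta'\mid h')\,\mathcal{P}_{\theta'}(s'\mid s,a)\,d\theta'}.
\]
On the other hand, the true Bayesian posterior update (with the reward treated as certain, as BAMCP assumes) is exactly $b(\theta\mid h'')\propto b(\theta\mid h')\,\mathcal{P}_\theta(s'\mid s,a)$. Substituting the inductive hypothesis $b(\theta\mid h') = \tilde b(\theta\mid h')$ on the right-hand side yields $b(\theta\mid h'') = \tilde b(\theta\mid h'')$, closing the induction. (Equivalently, one can argue non-inductively: the likelihood of the whole suffix trajectory factors as a product of $\theta$-independent action terms and dynamics terms $\prod_j \mathcal{P}_\theta(s_j\mid s_{j-1},a_j)$, the action terms cancel between numerator and denominator of the normalized posterior, and what remains is precisely the Bayesian update of $b(\cdot\mid h)$ along that trajectory.)

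The only place care is needed — and the main obstacle — is establishing the conditional independence of the action from $\theta$; it is exactly this that collapses all $\theta$-dependent factors to a single dynamics term matching the Bayesian update. It is also the step that fails under double progressive widening: there the realized next-state distribution at a node is governed by the DPW selection rule $\widetilde{\mathcal{P}}_\theta(s'\mid s,a)\neq\mathcal{P}_\theta(s'\mid s,a)$, so the telescoping product no longer reconstructs the true posterior. This is why one cannot naively combine DPW with root sampling, and instead casts the BAMDP into the MDP $\mathcal{M}^+$ as done in Section~\ref{ContBAMCP}.
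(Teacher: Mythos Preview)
Your proposal is correct and follows essentially the same approach as the paper: an induction on the length of the suffix history, with the base case $h'=h$ trivial and the inductive step hinging on the fact that action selection at each node is independent of the sampled $\theta$, so that both $b(\theta\mid h'as')$ and $\tilde b(\theta\mid h'as')$ reduce to the same normalized product $b(\theta\mid h')\mathcal{P}_\theta(s'\mid s,a)$. Your additional remarks on why this step fails under DPW are also in line with the paper's footnote discussion.
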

\begin{proof}
This lemma can be proved by induction.

Base case: When $h'=h$, $b(\theta|h') = \Tilde{b}(\theta|h') = b(\theta)$.

Step case: 
\begin{equation} \label{root_sampling}
    \begin{aligned}
        b(\theta|has') & = P(has'|\theta) P(\theta) / P(has') \\
        & = P(h|\theta) \mathcal{P}_\theta(s'|s, a) P(\theta) / P(has') \\
        & = b(\theta|h)P(h) \mathcal{P}_\theta(s'|s, a) / P(has') \\
        & = Z b(\theta|h)\mathcal{P}_\theta(s'|s, a) \\
        & = Z \Tilde{b}(\theta|h) \mathcal{P}_\theta(s'|s, a) \\
        & = Z \Tilde{b}(\theta|ha) \mathcal{P}_\theta(s'|s, a) = \Tilde{b}(\theta|has')
    \end{aligned}
\end{equation}
Here, $Z = 1/\int_\theta \mathcal{P}_\theta(s'|s, a) b(\theta|h) d\theta = 1/\int_\theta \mathcal{P}_\theta(s'|s, a) \tilde{b}(\theta|h) d\theta = 1/\int_\theta \mathcal{P}_\theta(s'|s, a) \tilde{b}(\theta|ha) d\theta$ is the normalization constant. The fifth equality in Eq. (\ref{root_sampling}) holds due to the inductive hypothesis. The sixth equality is based on the fact that the choice of $a$ at each node $h$ is made independently of the sample $\theta$. As for the last equality, to experience $\theta$ at $has'$, the sample $\theta$ needs to traverse $ha$ (with probability $\tilde{b}(\theta|ha)$) and then the state $s'$ needs to be sampled, which is with probability $\mathcal{P}_\theta(s'|s, a)$, so $\Tilde{b}(\theta|has') \propto \tilde{b}(\theta|ha) \mathcal{P}_\theta(s'|s, a)$.
\end{proof}

\section{Consistency Proof of Continuous BAMCP}
\label{proof}

This appendix provides the detailed proof for Theorem~\ref{thm1}. Our proof adapts the consistency proof methodology developed for Polynomial Upper Confidence Trees (PUCT) \citep{DBLP:conf/pkdd/AugerCT13}. To apply this framework, we first note that the Pessimistic Bayes-Adaptive MDP (P-BAMDP) defined in the main text can be formally treated as a standard, albeit large-state-space, MDP. \textbf{Throughout this proof, we will refer to this mathematical object as the Pessimistic MDP (P-MDP)}. Our goal is to show that Continuous BAMCP is a consistent solver for this P-MDP.

The remainder of this appendix is organized as follows. Section~\ref{proof.1} formally defines the P-MDP and introduces the key notations, assumptions, and the induction property central to our proof. Section~\ref{proof.2} establishes the base case for our backward induction. Sections~\ref{proof.3} and \ref{proof.4} present the two core inductive steps, showing how the consistency property propagates upward through the random and decision nodes of the search tree, respectively. Finally, Section~\ref{proof.5} concludes the proof.

\subsection{Preamble and Definitions}\label{proof.1}

\paragraph{P-MDP Formalization}
The P-MDP that our planner solves is defined by the following components:
\begin{itemize}
    \item \textbf{State Space $\mathcal{S}^+$}: The augmented information state space, where each state is $(s, h)$.
    \item \textbf{Action Space $\mathcal{A}$}: The action space of the original problem.
    \item \textbf{Transition Function $\mathcal{P}^{+}$}: The belief-weighted transition function over information states.
    \item \textbf{Reward Function $\tilde{r}$}: The fixed, bounded, penalized reward function from Eq. (\ref{p-rwd})
\end{itemize}
\paragraph{Relevant Notations}
We adopt the following notations:
\begin{itemize}
    \item $z$: A decision node, corresponding to an information state $(s, h)$.
    \item $w$: A random node, corresponding to a state-action pair $(z, a)$.
    \item $V^*_{p}(z), V^*_p(w)$: The optimal Bellman value of node $z$ and $w$ under the P-MDP.
    \item $n(z), n(w)$: The total number of visits to node $z$ and $w$ during MCTS.
    \item $\hat{V}(z), \hat{V}(w)$: The empirical average return obtained at node $z$ and $w$.
    \item $V(z)$: The value of a node $z$ estimated by the learned value function from the previous policy iteration.
    \item $d$: The depth of a node in the search tree, integer for decision node while semi-integer for random node
\end{itemize}

\begin{definition}[\textbf{Exponentially sure in $n$}] We say that some property $(P)$ depending on an integer $n$ is exponentially sure in $n$ (denoted e.s.) if there exists positive constant $C,h$ such that the probability that $(P)$ holds is at least
$$
1-C\exp(-hn).$$

\end{definition}

Our proof relies on the following assumptions:
\begin{assumption}[\textbf{Policy Regularity}]
\label{assum:regularity}
For any decision node $z=(s,h)$ and any precision $\Delta > 0$, the policy $\pi(\cdot|z)$ used to sample new actions has a non-zero probability of sampling a $\Delta$-optimal action within the current P-MDP. Formally, there exist fixed constants $\theta > 0$ and $p > 1$ such that,
\begin{equation}
    V^*_p(w=(z,a))\ge V^*_p(z) - \Delta \text{ with probability at least } \min(1,\theta\Delta^p)
\end{equation}
\end{assumption}

\begin{assumption}[\textbf{Bounded Value Function Error}]
\label{assum:v_error}
In our algorithm, when a Monte Carlo Tree Search simulation reaches a leaf node $z$ of the current search tree (i.e., a node that has not yet been expanded), we use a learned value function, $V$, to estimate its value for the backup step. This function is trained as part of the overall policy iteration framework. We assume that $V$ is an imperfect approximator of the P-MDP's true optimal value function, $V^*_{p}$, and that its approximation error is uniformly bounded by a constant $\epsilon_V \ge 0$ over the entire state space:
$$|V(z) - V^*_{p}(z)| \le \epsilon_V, \quad \forall z \in \mathcal{S}^+$$
Furthermore, we assume the output of $V(z)$ is bounded.
\end{assumption}

The proof aims to establish the following property via backward induction:
\begin{definition}[\textbf{Induction Property {Cons($\gamma, f,d$)}}]
\label{def:induction_property}
There exists a constant $C_d > 0$ and a function $f_d(\cdot)$ such that for all decision node $z$ at depth $d$, 
$$|\hat{V}(z)-V^*_p(z)|\le C_d n(z)^{-\gamma_d}+f_d(\epsilon_V) \text{ e.s. in } n(z)$$

and for all nodes $w$ at semi-integer depth $d+\frac{1}{2}$, $$|\hat{V}(w)-V^*_p(w)|\le C_{d+\frac{1}{2}} n(w)^{-\gamma_{d+\frac{1}{2}}}+f_{d+\frac{1}{2}}(\epsilon_V) \text{ e.s. in } n(w)
$$
where $\epsilon_V$ is the base approximation error of the value function (from Assumption \ref{assum:v_error}).
\end{definition}

\subsection{Base Step}\label{proof.2}

The backward induction begins at the maximum search depth, $d_{max}$. We aim to establish the induction property \textbf{{Cons($\gamma,f, d$)}} for random nodes $w$ at depth $d = d_{max} - \frac{1}{2}$. As our algorithm uses a learned value function, $V(z_i)$, to evaluate leaf nodes $z_i$ (children of $w$), this step relies on the value function error bound established in Assumption \ref{assum:v_error}.

Our goal is to bound the total error $|\hat{V}(w) - V^*_{p}(w)|$. Using the triangle inequality, we decompose this error:
$$|\hat{V}(w) - V^*_{p}(w)| \le |\hat{V}(w) - \mathbb{E}[V(z_i)]| + |\mathbb{E}[V(z_i)] - V^*_{p}(w)|$$
We bound the two terms on the right-hand side separately.

\paragraph{(a) Bounding the Sampling Error}
The first term, $|\hat{V}(w) - \mathbb{E}[V(z_i)]|$, is the concentration error of the MCTS sampling process. Since the value function output $V(z_i)$ is bounded (per Assumption \ref{assum:v_error}), we can apply Hoeffding's inequality. Following the original PUCT proof \citep{DBLP:conf/pkdd/AugerCT13}, we set the deviation tolerance $t = n(w)^{-\frac{1}{3}}$, which gives us the following bound:
$$|\hat{V}(w) - \mathbb{E}[V(z_i)]| \le O(n(w)^{-\frac{1}{3}})\text{ e.s. in } n(w) $$

\paragraph{(b) Bounding the Approximation Error}
The second term, $|\mathbb{E}[V(z_i)] - V^*_{p}(w)|$, represents the inherent bias from using an approximate value function. We bound it using Assumption \ref{assum:v_error}:
\begin{align*}
    |\mathbb{E}[V(z_i)] - V^*_{p}(w)| &= |\mathbb{E}[V(z_i)] - \mathbb{E}[V^*_{p}(z_i)]| \\
    &\le \mathbb{E}[|V(z_i) - V^*_{p}(z_i)|] \\
    &\le \mathbb{E}[\epsilon_V] = \epsilon_V
\end{align*}

\paragraph{Conclusion for the Base Step}
By combining the bounds for the sampling error and the approximation error, we obtain the bound on the total error:
$$|\hat{V}(w) - V^*_{p}(w)| \le O(n(w)^{-\frac{1}{3}}) + \epsilon_V  \text{ e.s. in } n(w)$$
This result precisely matches the form required by our revised induction property, \textbf{{Cons($\gamma,f, d$)}}. We have successfully shown that the property holds for the base case at depth $d = d_{max} - \frac{1}{2}$, with parameters:
\begin{itemize}
    \item $\gamma_{d_{max}-\frac{1}{2}} = \frac{1}{3}$
    \item $f_{d_{max}-\frac{1}{2}}(\epsilon_V) = \epsilon_V$
\end{itemize}

\subsection{Inductive Step: From Decision to Random Nodes}\label{proof.3}

In this section, we perform the first inductive step, moving from decision nodes up to random nodes. We assume the induction property \textbf{{Cons($\gamma,f, d$)}} holds for all decision nodes $z$ at depth $d$. Our objective is to prove that, as a consequence, the property also holds for their parent nodes, which are the random nodes $w$ at depth $d - \frac{1}{2}$.

A key distinction in our algorithm is the special handling of the very first action sampled from a decision node. This requires us to categorize all random nodes $w$ (at depth $d-\frac{1}{2}$) based on their history:
\begin{enumerate}
    \item \textbf{Category 1: First-Child Nodes ($w_0$)}: A random node that was the first child ever created from its parent decision node. Its initial value estimate was derived directly from our learned value function, $V$.
    \item \textbf{Category 2: Standard Nodes ($w_j, j>0$)}: All other random nodes. Their value estimates have always been computed through the standard MCTS backup process.
\end{enumerate}
We will analyze these two categories separately to show that the induction property holds universally.

\subsubsection*{B.3.1 Analysis of Standard Random Nodes ($w_j, j>0$)}

For a standard random node $w$ in this category, the proof structure closely follows Section 5 of \citet{DBLP:conf/pkdd/AugerCT13}. We aim to bound the total error $|\hat{V}(w) - V^*_{p}(w)|$.

Let $i_0 = \lfloor n(w)^\alpha \rfloor$ be the newest child of $w$ (a decision node at depth $d$), and let $r = i_0 - 1$ be the number of ``mature'' children.(i.e., all children except for the newest). Let $R_w(i) = r_i + \gamma \hat{V}(i)$ denote the return through child $i$. We decompose the total error into four terms:
\begin{align*}
    |\hat{V}(w) - V^*_{p}(w)| \le & \underbrace{|\sum_{i=1}^r \left( \frac{n(i)}{n(w)} - \frac{1}{r} \right) R_w(i)|}_{\text{(a) Weight Deviation Error}}
    +  \underbrace{|\frac{1}{r} \sum_{i=1}^r (R_w(i) - \mathbb{E}[R_w(j)])|}_{\text{(b) Sampling Error of ``Mature'' Returns}} 
    \\ &+  \underbrace{|\mathbb{E}[R_w(j)] - V^*_{p}(w)|}_{\text{(c) Bias / Approximation Error}} 
    +  \underbrace{|\frac{n(i_0)}{n(w)} R_w(i_0)|}_{\text{(d) ``Newest'' Child Error}}
\end{align*}
We now bound each term:

\paragraph{(a) Bounding the Weight Deviation Error} This term is bounded by leveraging the fact that the children of a random node are explored in a nearly uniform manner (a consequence of the DPW mechanism, as shown in Lemma 1 of \citet{DBLP:conf/pkdd/AugerCT13}. Following their derivation, this term is bounded by $O(n(w)^{-(1-\alpha)} + n(w)^{-\alpha})$.

\paragraph{(b) Bounding the Sampling Error of ``Mature'' Returns} This term represents the concentration error for the returns of the mature child nodes. The returns $R_w(i)$ are i.i.d. bounded random variables. Therefore, we can apply Hoeffding's inequality. To align with the proof by Auger et al. (2013) and derive the resulting convergence rate for this layer, we select the deviation tolerance $t$ as:
\[t := n(w)^{-\frac{\gamma_d}{1+3\gamma_d}}\]
With this choice, the application of Hoeffding's inequality shows that the error for this term is bounded by $O(n(w)^{-\frac{\gamma_d}{1+3\gamma_d}})$ and that this bound holds exponentially surely.

\paragraph{(c) Bounding the Bias / Approximation Error} This term is unique to our proof and requires our induction hypothesis.
\begin{align*}
    |\mathbb{E}[R_w(j)] - V^*_{p}(w)| &= |\mathbb{E}[r_j + \gamma \hat{V}(j)] - \mathbb{E}[r_j + \gamma V^*_{p}(j)]| \\
    &= \gamma |\mathbb{E}[\hat{V}(j) - V^*_{p}(j)]| \\
    &\le \gamma \mathbb{E}[|\hat{V}(j) - V^*_{p}(j)|]
\end{align*}
By the induction hypothesis for nodes at depth $d$, we have $|\hat{V}(j) - V^*_{p}(j)| \le C_d n(j)^{-\gamma_d} + f_d(\epsilon_V)$. Substituting this into the inequality gives:
\[|\mathbb{E}[R_w(j)] - V^*_{p}(w)| \le \gamma \mathbb{E}[C_d n(j)^{-\gamma_d}] + \gamma f_d(\epsilon_V) \text{ e.s. in } n(w)\]
The first part of this bound is a decaying term, while the second part shows how the bias propagates.

\paragraph{(d) Bounding the ``Newest'' Child Error} Following the same logic as \citet{DBLP:conf/pkdd/AugerCT13}, this term is bounded by $O(n(w)^{-\alpha})$ because the newest child has been visited only a small number of times.

\paragraph{Conclusion for Standard Nodes} Combining these bounds, the overall convergence rate is determined by the slowest-converging term. Following the derivation in the PUCT paper, setting the exploration parameter $\alpha=\frac{3\gamma_d}{1+3\gamma_d}$, establishes the slowest convergence rate as $\frac{\gamma_d}{1+3\gamma_d}$. The bias term is propagated from the layer below. Thus, for any standard random node, the \textbf{{Cons($\gamma,f,d$)}} property holds.

\subsubsection*{B.3.2 Analysis of the First-Child Random Node ($w_0$)}

For the first child $w_0$, the value estimate includes a special term from its initial visit. On the first visit, the algorithm samples a next state, which we denote $s_0'$, and receives a corresponding reward $r_0$. The next state's value is thus initially estimated using the learned function, $V(s_0')$. The error analysis for this random node, $|\hat{V}(w_0) - V^*_{p}(w_0)|$, will contain the same four error terms as in the standard case (for the subsequent backups), plus an additional error term originating from the first visit:
\[\text{Additional Error} = \left| \frac{1}{n(w_0)} (r_0 + \gamma V(s'_0)) - \frac{1}{n(w_0)}V^*_{p}(w_0) \right|\]
Since $r_0$, $V(s'_0)$, and $V^*_{p}(w_0)$ are all bounded constants, this additional error term is bounded by $O(n(w_0)^{-1})$.

The overall convergence rate for $w_0$ is determined by the minimum of the rates of all error components. From the PUCT derivation, we know that $\gamma_d \le 1$, which implies $\gamma_{d-1/2} = \frac{\gamma_d}{1+3\gamma_d} < 1$. Therefore, the convergence rate of the additional error term, $O(n(w_0)^{-1})$, is faster than the bottleneck rate of the standard MCTS error terms, $O(n(w_0)^{-\gamma_{d-1/2}})$. This special term does not change the overall convergence rate. Thus, the \textbf{{Cons($\gamma,f,d$)}} property also holds for the first-child random node $w_0$ with the exact same parameters.

\subsubsection*{B.3.3 Conclusion for this Section}

We have shown that for any random node $w$ at depth $d-\frac{1}{2}$, if its children at depth $d$ satisfy the \textbf{{Cons($\gamma,f,d$)}} property, then $w$ itself also satisfies the property. The parameters of the property propagate upward according to the following recursive formulas:
\begin{itemize}
    \item $\gamma_{d-\frac{1}{2}} = \frac{\gamma_d}{1+3\gamma_d}$
    \item $f_{d-\frac{1}{2}}(\epsilon_V) = \gamma f_d(\epsilon_V)$
\end{itemize}

\subsection{Inductive Step: From Random to Decision Nodes}\label{proof.4}

In this section, we perform the second inductive step. We assume the induction property \textbf{{Cons($\gamma, f, d$)}} holds for all random nodes $w$ at depth $d + \frac{1}{2}$. Our objective is to prove that the property also holds for their parent nodes, which are the decision nodes $z$ at depth $d$. The proof adapts the structure of Section 6 in \citet{DBLP:conf/pkdd/AugerCT13}, carefully accounting for the bias term $f_{d+\frac{1}{2}}(\epsilon_V)$ from our induction hypothesis. First we establish an upper bound on $\hat{V}(z)-V^*_p(z)$, and then a lower bound.

\subsubsection*{B.4.1 Upper Bound on the Error}

To establish an upper bound on $\hat{V}(z) - V^*_{p}(z)$, we partition the children $w_i$ of $z$ into two classes, with a parameter $\epsilon < 1-\alpha$ to be determined later:
\begin{itemize}
    \item \textbf{Class I}: Children $w_i$ with few visits, $n(w_i) \le n(z)^{1-\alpha-\epsilon}$.
    \item \textbf{Class II}: All other children, for which $n(w_i) > n(z)^{1-\alpha-\epsilon}$.
\end{itemize}
For Class I, the number of children is at most $n(z)^\alpha$, so their total contribution is bounded by $\frac{n(z)^\alpha \cdot n(z)^{1-\alpha-\epsilon}}{n(z)} = n(z)^{-\epsilon}$. For Class II, we apply the induction hypothesis for nodes at depth $d+\frac{1}{2}$:
\begin{align*}
    \sum_{i \in \text{Class II}} \frac{n(w_i)}{n(z)}(\hat{V}(w_i) - V^*_{p}(z)) &\le \sum_{i \in \text{Class II}} \frac{n(w_i)}{n(z)}(\hat{V}(w_i) - V^*_{p}(w_i)) \\
    &\le \sum_{i \in \text{Class II}} \frac{n(w_i)}{n(z)}(C_{d+\frac{1}{2}} n(w_i)^{-\gamma_{d+\frac{1}{2}}} + f_{d+\frac{1}{2}}(\epsilon_V)) \\
    &\le C_{d+\frac{1}{2}} \left(n(z)^{1-\alpha-\epsilon}\right)^{-\gamma_{d+\frac{1}{2}}} + f_{d+\frac{1}{2}}(\epsilon_V)
\end{align*}

Combining the two parts, we get:
\[\hat{V}(z) - V^*_{p}(z) \le n(z)^{-\epsilon} + C_{d+\frac{1}{2}} n(z)^{-\gamma_{d+\frac{1}{2}}(1-\alpha-\epsilon)} + f_{d+\frac{1}{2}}(\epsilon_V)\]
To optimize this bound, we balance the exponents by choosing $\epsilon = \frac{\gamma_{d+\frac{1}{2}}(1-\alpha)}{1+\gamma_{d+\frac{1}{2}}}$. This leads to the final upper bound for the error:
$$\hat{V}(z) - V^*_{p}(z) \le (1+C_{d+\frac{1}{2}}) n(z)^{-\frac{\gamma_{d+\frac{1}{2}}(1-\alpha)}{1+\gamma_{d+\frac{1}{2}}}} + f_{d+\frac{1}{2}}(\epsilon_V)$$

\subsubsection*{B.4.2 Lower Bound on the Error}

The proof for the lower bound is more involved. It relies on the parameter definitions from the PUCT paper, as well as two key results regarding the UCT exploration mechanism, which we state here for completeness before proceeding.

\begin{lemma}[Adapted from Lemma 3, \citet{DBLP:conf/pkdd/AugerCT13}]
\label{lem3PUCT}
Consider a bandit setting where the score of a child node $w_i$ at time $n$ is computed by the UCT rule:
\[sc_n(w_i) = \hat{V}_n(w_i) + \sqrt{\frac{g(n)}{n(w_i)}}\]
where $g(n)$ is a non-decreasing exploration function. If $i$ denotes the  $i^{th}$ constucted child, for all $n > i^\frac{1}{\alpha(1-\alpha)}$ we have

$$n(w_i) \ge \frac{1}{4} \min \left( g(n^{1-\alpha}),n^{1-\alpha}\right).$$
\end{lemma}

\begin{corollary}[Adapted from Corollary 3, \cite{DBLP:conf/pkdd/AugerCT13}]
\label{cor:visit_bound}
For the exploration function $g(n)=n^e$ with $0 < e < 1$, we obtain 
\begin{equation}
    n(w_i) \ge \frac{1}{4}n^{e(1-\alpha)} \text{ if } i\le n^{\alpha(1-\alpha)}.
\end{equation}
\end{corollary}

With these results, we can define the parameters as functions of $\gamma_{d+\frac{1}{2}}$ and $p$ (from Assumption~\ref{assum:regularity}) for this layer and proceed with the three-step proof structure. 
\begin{itemize}
    \item Progressive widening coefficient: $\alpha := \frac{\gamma_{d+\frac{1}{2}}}{1+4\gamma_{d+\frac{1}{2}}}$
    \item Exploration coefficient: $e := \frac{1}{2p\left(1+4\gamma_{d+\frac{1}{2}}\right)}$
    \item Auxiliary proof parameters: $\xi := \frac{1}{1+e\gamma_{d+\frac{1}{2}}(1-\alpha)}$ and $\Delta := \left(\frac{1}{4}n(z)^{\xi e(1-\alpha)}\right)^{-\gamma_{d+\frac{1}{2}}}$
\end{itemize}

\paragraph{Step 1: Existence of a near-optimal child}
Based on Assumption~\ref{assum:regularity} (Policy Regularity), a direct adaptation of the proof in \citet{DBLP:conf/pkdd/AugerCT13} shows that a near-optimal child node is discovered with high probability. Formally:
\textit{Exponentially surely in $n(z)$, there exists at time $\lceil n(z)^{\xi(1-\alpha)} \rceil$ a child $w_0$ of $z$ such that}
\begin{equation}\label{eq near-optimal child}
    V^*_{p}(w_0) \ge V^*_{p}(z)-\Delta \text{ and } i_0 \le n(z)^{\xi(1-\alpha)\alpha}
\end{equation}

\paragraph{Step 2: Score of children selected at later stages}
The objective of this step is to show that any child selected after time $n(z)^\xi$ must have a high score. Let $n'$ be such that $n^\xi \le n' \le n$. And, by Corollary~\ref{cor:visit_bound},

$$n'(w_0)\ge \frac{1}{4}n'^{e(1-\alpha) } \ge \frac{1}{4} n^{\xi e (1-\alpha)}.$$

First, according to Corollary~\ref{cor:visit_bound}, the near-optimal child $w_0$ discovered in Step 1 will be visited a number of times $n(w_0)$ that grows polynomially with $n(z)$. 

We can therefore apply our induction hypothesis \textbf{{Cons($\gamma, f, d$)}} to $w_0$, with the Eq.(\ref{eq near-optimal child}), there exists a $C' >0$ that:

\begin{align*}
    \hat{V}(w_0) & \ge V^*_{p}(w_0) - C' \left( \frac{1}{4} n^{\xi e (1-\alpha)}\right)^{-\gamma_{d+\frac{1}{2}}} - f_{d+\frac{1}{2}}(\epsilon_V) \\ & \ge V^*_{p}(z) - (1+C')\Delta  - f_{d+\frac{1}{2}}(\epsilon_V)
\end{align*}
Now, consider any child $w_1$ selected by the UCT rule at a later time $n' \ge n(z)^\xi$. Its score must be at least as high as the score of $w_0$. This gives the key inequality:
\[\hat{V}(w_1) + \sqrt{\frac{n'^e}{n'(w_1)}} \ge \hat{V}(w_0) + \sqrt{\frac{n'^e}{n'(w_0)}}\]
Combining these results, we can establish a lower bound on the estimated value $\hat{V}(w_1)$ of any child selected at a late stage of the search. 
$$ \hat{V}(w_1)+\sqrt{\frac{n}{n'(w_1)}} \ge V^*_p(z) - (1+C')\Delta - f_{d+\frac{1}{2}}(\epsilon_V)
$$

As shown by \cite{DBLP:conf/pkdd/AugerCT13}, this property holds exponentially surely in $n(z)$.

\paragraph{Step 3: Bounding the value estimate $\hat{V}(z)$}
Consider a child $w_1$ selected after $n(z)^\xi$. By the previous step, exponentially in $n(z)$, this child must either satisfy
\begin{equation}\label{cond1}
    \sqrt{\frac{n^e}{n(w_1)}}\ge \Delta
\end{equation}
\begin{equation}\label{cond2}
    \text{or } \hat{V}(w_1) \ge V(z)-(2+C')\Delta - f_{d+\frac{1}{2}}(\epsilon_V)
\end{equation}

We now bound the average value $\hat{V}(z)$ by partitioning the children of $z$ into three categories:
\begin{enumerate}
    \item Children visited only before time $n(z)^{\xi}$.
    \item Children visited after $n(z)^{\xi}$ satisfying (\ref{cond1})
    \item Children visited after $n(z)^{\xi}$ satisfying (\ref{cond2}).
\end{enumerate}
We bound the contribution of each category to the total error sum. The contribution from \textbf{Category 1} is bounded by $O(n(z)^{\xi-1})$, since we have

$$\left| \sum_{w \text{ in cat.1}} \frac{n(w)}{n(z)}\left( \hat{V}(w)-V^*_p(z)\right)\right|\le \frac{\sum_{w \text{ in cat.1}}n(w)}{n(z)} \le \frac{n(z)^\xi}{n(z)}$$

For \textbf{Category 2}, the definition of the category implies $n(w_i) \le n(z)^e / \Delta^2$. Since there are at most $n(z)^\alpha$ children, their total contribution to the error sum is bounded by $O(n(z)^{\alpha+e-1}/\Delta^2)$. For \textbf{Category 3}, the bound on $\hat{V}(w_i)$ from Step 2 introduces the crucial $-f_{d+1/2}(\epsilon_V)$ term.

Combining the bounds from all three categories gives the final lower bound on the error. Following the argument in \cite{DBLP:conf/pkdd/AugerCT13}:

\[\hat{V}(z) - V^*_{p}(z) \ge 
\underbrace{-(2+C')\Delta(1-n(z)^{\xi-1}) - f_{d+\frac{1}{2}}(\epsilon_V)}_{\text{cat.3}}
-\underbrace{n(z)^{\xi-1}}_{\text{cat.1}}  - \underbrace{\frac{n(z)^{\alpha+e-1}}{\Delta^2}} _{\text{cat.2}} \]
By comparing the magnitudes of the decaying terms with the chosen parameter definitions, as is done in \cite{DBLP:conf/pkdd/AugerCT13}, this complex expression simplifies to the desired form.

\subsubsection*{B.4.3 Conclusion for this Section}
By combining the results from the upper and lower bound analyses, we conclude that the total error at a decision node $z$ is bounded as:
\[|\hat{V}(z) - V^*_{p}(z)| \le C_d n(z)^{-\gamma_d} + f_{d+\frac{1}{2}}(\epsilon_V)\]
This demonstrates that the induction property \textbf{{Cons($\gamma, f, d$)}} holds for depth $d$. The parameters propagate upward according to the following recursive formulas, which are identical to those derived by \cite{DBLP:conf/pkdd/AugerCT13}:
\begin{itemize}
    \item $C_d = (5+C')4^{\gamma_{d+\frac{1}{2}}}$
    \item $\gamma_d = \frac{\gamma_{d+\frac{1}{2}}}{1+7\gamma_{d+\frac{1}{2}}}$
    \item $f_d(\epsilon_V) = f_{d+\frac{1}{2}}(\epsilon_V)$
\end{itemize}

\subsection{Conclusion of the proof}\label{proof.5}

In the preceding sections, we have established all the necessary components for our backward induction argument.
\begin{itemize}
    \item In Section~\ref{proof.2}{}, we established the base case, proving that the induction property \textbf{{Cons($\gamma, f, d$)}} holds for random nodes at depth $d = d_{max} - \frac{1}{2}$, with initial parameters $\gamma_{d_{max}-\frac{1}{2}} = \frac{1}{3}$ and $f_{d_{max}-\frac{1}{2}}(\epsilon_V) = \epsilon_V$.
    \item In Section~\ref{proof.3}, we proved the inductive step from decision nodes up to random nodes, deriving the recursions $\gamma_{d-\frac{1}{2}} = \frac{\gamma_d}{1+3\gamma_d}$ and $f_{d-\frac{1}{2}}(\epsilon_V) = \gamma f_d(\epsilon_V)$.
    \item In Section~\ref{proof.4}, we proved the inductive step from random nodes up to decision nodes, deriving the recursions $\gamma_d = \frac{\gamma_{d+\frac{1}{2}}}{1+7\gamma_{d+\frac{1}{2}}}$ and $f_d(\epsilon_V) = f_{d+\frac{1}{2}}(\epsilon_V)$.
\end{itemize}
With the base case proven and both inductive steps validated, we conclude by backward induction that the property \textbf{{Cons($\gamma, f, d$)}} holds for all depths up to the root of the search tree ($d=0$).

By solving the recursion for the bias term, we can find the accumulated error at the root. The base error at depth $d_{max}-\frac{1}{2}$ is $f_{d_{max}-\frac{1}{2}}(\epsilon_V) = \epsilon_V$. The error term is multiplied by a discount factor $\gamma$ each time it propagates from a decision node layer up to a random node layer. This occurs $d_{max}$ times from the leaves to the root. Therefore, the bias term at the root node ($d=0$) is $f_0(\epsilon_V) = \gamma^{d_{max}} \epsilon_V$.

Specifically, for the root node $z_0$, its total error is bounded by:
\[|\hat{V}(z_0) - V^*_{p}(z_0)| \le C_0 n(z_0)^{-\gamma_0} + \gamma^{d_{max}} \epsilon_V\]
where $\gamma_0 > 0$ and $C_0$ are constants derived from the recursions. This shows that as the number of simulations $E = n(z_0) \rightarrow \infty$, the estimated value of the root node converges to a ball of radius $\gamma^{d_{max}} \epsilon_V$ around the true optimal value of the P-MDP. Notably, the size of this error ball, which stems from our function approximator, shrinks exponentially with the search depth $d_{max}$.

This completes the proof of Theorem~\ref{thm1}, establishing that Continuous BAMCP is a consistent planner for the P-MDP.
\hfill \qedsymbol

\section{Alternative Design Choices for Continuous BAMCP} \label{alg1_details}

The terms labeled in blue in Algorithm \ref{alg:2} have alternative design choices, which are introduced as below. Empirical comparisons among these alternatives are reserved for future work.

$\tilde{Q}((s, h), x)$ in the exploration strategy, i.e., $a \leftarrow \arg \max_{x \in C((s, h))} \tilde{Q}((s, h), x)$, could take various forms. For instance, in \cite{DBLP:conf/lion/CouetouxHSTB11, DBLP:journals/jair/GuezSD13, DBLP:conf/aaai/LeeJKK20}, $\tilde{Q}((s, h), x) = Q((s, h), x) + c \sqrt{\frac{\log N((s, h))}{N((s, h), x)}}$; in PUCT \citep{DBLP:conf/pkdd/AugerCT13}, $\tilde{Q}((s, h), x) = Q((s, h), x) + \sqrt{\frac{N((s, h))^{e(d)}}{N((s, h), x)}}$, where $e(d)$ is a schedule of coefficients related to the search depth $d$; in Sampled MuZero (a variant of MuZero that can be applied in continuous action spaces \citep{DBLP:conf/icml/HubertSABSS21}), $\tilde{Q}((s, h), x) = Q((s, h), x) + \hat{\pi}(x|(s, h)) \frac{\sqrt{N((s, h))}}{1+N((s, h), x)} \left(c_1 + \log \left(\frac{N((s,h))+c_2+1}{c_2}\right)\right)$.
% , which involves the being-learned policy $\pi$. 
Here, $c$, $c_1$, $c_2$ are hyperparameters, $\hat{\pi} = \hat{\beta} \pi^{1 - 1/\tau}$ is a sample policy defined upon the real policy $\pi$. In particular, at each decision point $(s, h)$, Sampled MuZero would sample $M$ actions $\{a_i\}$ from the distribution $\pi^{1/\tau}$ and accordingly define $\hat{\beta}(a|(s, h)) = \sum_{i} \mathds{1}_{a_i = a} / M$, where $\tau > 0$ is a temperature hyperparameter. \textbf{Thus, Sampled MuZero does not adopt progressive widening like ours.} Following BAMCP, we adopt the first definition of $\tilde{Q}((s, h), x)$, though it could potentially be improved with other choices. In addition, as an implementation trick \citep{DBLP:conf/iclr/HamrickFBGVWABV21}, the Q estimates are usually normalized into $\bar{Q} \in [0, 1]$ before being used to calculate $\tilde{Q}$ as above. The normalized estimates can be computed as $\bar{Q}((s, h), x) = \frac{Q((s, h), x) - Q_{\text{min}}}{Q_{\text{max}} - Q_{\text{min}}}$, where $Q_{\text{max}}$ and $Q_{\text{min}}$ are the maximum and minimum Q values observed in the search tree so far.
% Moreover, to encourage additional exploration at the root node, a small amount of Dirichlet noise can be added to the policy $\pi$ at the root node alone.

As the planning/search result, $\pi_{\text{ret}}$ can take multiple forms. In \citet{DBLP:journals/jair/GuezSD13, DBLP:conf/aips/SunbergK18, DBLP:conf/aaai/LeeJKK20}, $\pi_{\text{ret}}((s, h)) = \arg \max_{a \in C((s, h))} Q((s, h), a)$; in (Sampled) MuZero, $\pi_{\text{ret}}(a|(s, h)) = \frac{N((s, h), a)^{1/\tau}}{\sum_{x \in C((s, h))}N((s, h), x)^{1/\tau}}$; in ROSMO (a variant of MuZero with improved performance in offline scenarios \citep{DBLP:conf/iclr/LiuLLYX23}), $\pi_{\text{ret}}(a|(s, h)) \propto \pi(a|(s, h)) \exp(Q((s, h), a) - V((s, h)))$. Here, $\tau \in (0, 1]$ is a temperature parameter and decays with the training process, ensuring the action selection becomes greedier. We select the second form for $\pi_{\text{ret}}$ in Algorithm \ref{alg:2}. This is because (1) as described in PUCT, the returned action should be the most visited one, which is not necessarily the one with the highest Q value, and (2) ROSMO adopts one-step look-ahead rather than deep tree search at each root node, which does not align with our approach.

As for the conditions of double progressive widening, PUCT designs $\alpha$ and $\beta$ to be functions of the search depth $d$, while UCT-DPW \citep{DBLP:conf/lion/CouetouxHSTB11} utilizes a different set of conditions: $\left \lceil {K_a N((s, h))^\alpha}\right \rceil \geq |C((s, h))|$, $\left \lceil {K_s N((s, h), a)^\beta}\right \rceil \geq |C((s, h), a)|$, where $K_a, K_s, \alpha, \beta$ are all constant hyperparameters. Moreover, when the progressive widening condition for sampling the next state is not satisfied, either the least visited node in \(C((s, h), a)\) can be selected (following PUCT), or a random node can be sampled from \(C((s, h), a)\) according to a distribution proportional to the number of visits (following UCT-DPW). As shown in Algorithm \ref{alg:2}, \textbf{we follow the designs of PUCT}, but keep $\alpha$ and $\beta$ as constants for simplicity in hyperparameter fine-tuning. 

Finally, the condition for continuing the simulation procedure, i.e., $N((s, h)) > 1$, could potentially be replaced with $N((s, h), a) > 1$ or $N((s', hars')) > 0$. These conditions indicate that the nodes $(s, h)$, $(s, h, a)$, and $(s', hars')$ have been visited before, respectively. At the end of the simulation procedure, we can either apply rollouts, i.e., simulating a single path until the end of an episode, to estimate the expected value for a leaf node \((s, h)\), or directly use \(V((s, h))\) as the estimation. The former approach is widely used in online planning algorithms \citep{DBLP:journals/jair/GuezSD13, DBLP:conf/aips/SunbergK18, DBLP:conf/aaai/LeeJKK20}, while the latter is used in policy iteration frameworks like ours.

\section{Key Hyperparameter Setup} \label{KeyPara}

\begin{table}[htbp]
\centering
\resizebox{\textwidth}{!}{%
\begin{tabular}{|c|c|cccc|cccc|cccc|}
\hline
{\multirow{2}{*}{\makecell{Data Type}}} & \multirow{2}{*}{Environment} & \multicolumn{4}{c|}{BA-MBRL} & \multicolumn{4}{c|}{BA-MCTS} & \multicolumn{4}{c|}{BA-MCTS-SL} \\ \cline{3-14} & & $K$ & $\lambda$ & $H$ & $N$ & $K$ & $\lambda$ & $H$ & $N$ & $K$ & $\lambda$ & $H$ & $N$\\ 
\hline
{random} & {HalfCheetah} & {10} & {7} & {6} & {200} & {10} & {7} & {6} & {800} & {10} & {7} & {6} & {500}\\
{random} & {Hopper} & {6} & {50} & {47} & {700} & {6} & {50} & {47} & {800} & {6} & {50} & {47} & {500}\\
{random} & {Walker2d} & {10} & {0.5} & {20} & {700} & {10} & {0.5} & {20} & {800} & {10} & {0.5} & {20} & {500}\\
\hline
{medium} & {HalfCheetah} & {12} & {6} & {6} & {300} & {12} & {6} & {5} & {800} & {12} & {6} & {5} & {500}\\
{medium} & {Hopper} & {12} & {40} & {42} & {200} & {12} & {40} & {42} & {800} & {12} & {40} & {42} & {200}\\
{medium} & {Walker2d} & {8} & {5} & {20} & {700} & {8} & {5} & {20} & {800} & {8} & {5} & {20} & {500}\\
\hline 
{med-replay} & {HalfCheetah} & {11} & {40} & {10} & {300} & {11} & {40} & {10} & {800} & {11} & {40} & {10} & {500}\\
{med-replay} & {Hopper} & {7} & {5} & {5} & {700} & {7} & {5} & {5} & {800} & {7} & {5} & {5} & {500}\\
{med-replay} & {Walker2d} & {13} & {2.5} & {47} & {1000} & {13} & {2.5} & {47} & {800} & {13} & {2.5} & {47} & {500}\\
\hline 
{med-expert} & {HalfCheetah} & {7} & {100} & {5} & {1000} & {7} & {100} & {5} & {800} & {7} & {100} & {5} & {1100}\\
{med-expert} & {Hopper} & {12} & {40} & {43} & {600} & {12} & {40} & {43} & {800} & {12} & {40} & {43} & {500}\\
{med-expert} & {Walker2d} & {6} & {20} & {37} & {400} & {6} & {20} & {37} & {800} & {6} & {20} & {37} & {500}\\
\hline
\end{tabular}%
}
\caption{Key hyperparameters of the proposed algorithms for each evaluation task in the ablation study. $K$: ensemble size, $\lambda$: reward penalty coefficient, $H$: rollout horizon, $N$: number of training epochs.}
\label{table:3}
\end{table}

Table~\ref{table:3} lists the key hyperparameters of BA-MCTS and its ablated variants used in the ablation study. \textbf{For each task, an ensemble of $K$ dynamics and reward models is trained using the provided offline dataset. These learned models are then utilized as a simulator to train a control policy using off-the-shelf RL methods, such as SAC. The policy is trained for $N$ epochs. At each epoch, $50000H$ transitions are sampled by interacting with the simulator, followed by $1000$ RL training iterations. In particular, $50000$ states are randomly sampled from the offline dataset, with each state followed by a rollout lasting $H$ time steps. To mitigate overestimation, a reward penalty based on the discrepancy among the ensemble members is applied with a coefficient $\lambda$, as shown in Equation (\ref{p-rwd}).} The setups for $K$, $\lambda$, and $H$ are almost the same across the three algorithms and primarily inherited from the baseline -- ``Optimized" \citep{DBLP:conf/iclr/LuBPOR22}, to make sure the improvements are brought by the Bayesian RL and deep search components (rather than tricky hyperparameter setups).

The policy is evaluated on the ground truth environment for 10 episodes at the end of each training epoch. For all tables involving benchmarking results on D4RL, we report the average scores across the final 10 training epochs of our algorithms. It is important to note that increasing the number of training epochs $N$ does not necessarily lead to better policy performance, since the training is based on learned dynamics and reward models rather than the ground truth. 
% Additionally, it is more prudent to select the best-performing model from the final phase of training rather than defaulting to the policy from the last training epoch, as policy performance is not monotonically increasing and a superior policy is preferred for deployment. 
% Thus, the results in Tables \ref{table:1} and \ref{table:2}, which are average over the final training phase, are probably not the peak performance of corresponding algorithms.
According to \citet{DBLP:conf/iclr/LuBPOR22} and our experiments, the hyperparameters listed above can significantly influence the performance of model-based RL. Adjusting these hyperparameters could either enhance or impair the learning performance of our algorithms. We also suspect that the performance of the baselines listed in Table  \ref{table:2}, which are from their original papers, could be further improved by fine-tuning the relevant hyperparameters. 

\begin{table}[htbp]
\centering
\resizebox{\textwidth}{!}{%
\begin{tabular}{|c|c|cccccc|cccccccc|}
\hline
{\multirow{2}{*}{\makecell{Data Type}}} & \multirow{2}{*}{Environment} & \multicolumn{6}{c|}{BA-MCTS} & \multicolumn{8}{c|}{BA-MCTS-SL} \\ \cline{3-16} & & $\rho$ & $\alpha$ & $c$ & $\eta$ & $n_s$ & $n_a$ & $\rho$ & $\alpha$ & $c$ & $\eta$ & $n_s$ & $n_a$ & $N_{SL}$ & $N_{P}$ \\ 
\hline
{random} & {HalfCheetah} & {0.1} & {0.5} & {2.5} & {0.3} & {1} & {20} & {0.1} & {0.5} & {2.5} & {0.3} & {1} & {20} & {5} & {0}\\
{random} & {Hopper} & {0.1} & {0.5} & {2.5} & {0.3} & {1} & {20} & {0.1} & {0.5} & {2.5} & {0.3} & {1} & {20} & {5} & {0}\\
{random} & {Walker2d} & {0.1} & {0.5} & {2.5} & {0.3} & {1} & {20} & {0.1} & {0.5} & {2.5} & {0.3} & {1} & {20} & {5} & {100}\\
\hline
{medium} & {HalfCheetah} & {0.1} & {0.5} & {1.0} & {0.1} & {5} & {10} & {0.1} & {0.5} & {1.0} & {0.1} & {5} & {10} & {20} & {0}\\
{medium} & {Hopper} & {0.1} & {0.5} & {2.5} & {0.3} & {1} & {20} & {0.1} & {0.5} & {1.0} & {0.3} & {1} & {20} & {5} & {0}\\
{medium} & {Walker2d} & {0.1} & {0.5} & {2.5} & {0.3} & {1} & {20} & {0.1} & {0.5} & {2.5} & {0.3} & {1} & {20} & {5} & {100}\\
\hline 
{med-replay} & {HalfCheetah} & {0.1} & {0.8} & {1.0} & {0.3} & {5} & {10} & {0.1} & {0.8} & {2.5} & {0.3} & {1} & {20} & {5} & {0}\\
{med-replay} & {Hopper} & {0.1} & {0.8} & {1.0} & {0.1} & {1} & {20} & {0.1} & {0.8} & {1.0} & {0.1} & {1} & {20} & {15} & {0}\\
{med-replay} & {Walker2d} & {0.1} & {0.8} & {2.5} & {0.3} & {1} & {20} & {0.1} & {0.8} & {2.5} & {0.3} & {1} & {20} & {5} & {200}\\
\hline 
{med-expert} & {HalfCheetah} & {0.1} & {0.8} & {1.0} & {0.3} & {5} & {10} & {0.1} & {0.8} & {2.5} & {0.3} & {1} & {20} & {5} & {0}\\
{med-expert} & {Hopper} & {0.1} & {0.8} & {1.0} & {0.3} & {1} & {20} & {0.1} & {0.8} & {1.0} & {0.3} & {1} & {20} & {5} & {0}\\
{med-expert} & {Walker2d} & {0.1} & {0.8} & {2.5} & {0.3} & {1} & {20} & {0.1} & {0.8} & {2.5} & {0.3} & {1} & {20} & {15} & {100}\\
\hline
\end{tabular}%
}
\caption{Important hyperparameters used in the search process.}
\label{table:4}
\end{table}

BA-MCTS and BA-MCTS-SL utilize Continuous BAMCP (i.e., Algorithm \ref{alg:2}) to collect samples for subsequent policy distillation. \textbf{Instead of performing a tree search at every state, we randomly select a proportion (i.e., $\rho$) of states from the available 50000 states at each rollout time step as root nodes for tree search to decide on the optimal actions.  For the remaining states, actions are sampled directly from the policy, i.e., $a \sim \pi(\cdot|s)$.} The tree search procedure is detailed in Algorithm \ref{alg:2}, with the number of MCTS iterations, $E$, set to 50. \textbf{Increasing $\rho$ and $E$ can potentially enhance performance, but it will also linearly increase the computational cost.} Table \ref{table:4} outlines the key hyperparameters related to the search process for each algorithm and task. (1) As described in Algorithm \ref{alg:2}, the parameters $\alpha$ and $\beta$ control the rate of double progressive widening. ($\beta$ is set as 0.5 across all tasks.) To encourage deeper search, we limit the number of actions sampled from a state under $n_a$ and the number of next states sampled from an action under $n_s$, respectively. Action selection follows the UCT rule, as discussed in Appendix \ref{bamcp}, where $c > 0$ balances the exploration and exploitation. Additionally, inspired by the success of MuZero in enhancing exploration, we introduce Dirichlet noise $x_d$ at the root nodes, where actions are sampled from a mixture of distributions: $a \sim \eta x_d + (1-\eta) \pi(\cdot|s)$ and $\eta$ controls the mixture rate. Notably, for $(c, \eta, n_a, n_s)$, we explore the set of possible combinations: $\{(2.5, 0.3, 20, 1), (1.0, 0.3, 20, 1), (1.0, 0.1, 20, 1), (1.0, 0.3, 10, 5), (1.0, 0.1, 10, 5)\}$ during hyperparameter fine-tuning. We believe there are likely more optimal search settings yet to be discovered. (2) In BA-MCTS-SL, policy improvement is achieved through supervised learning. We find that, rather than learning solely from samples collected within the current epoch, incorporating a buffer of samples from the past $N_{SL}$ epochs helps to stabilize the learning process. Also, in the Walker2d environment, BA-MCTS-SL requires a warm-up training phase of $N_{P}$ epochs using BA-MBRL, allowing the initial policy to generate effective signals for supervised learning.

\section{Details of the Tokamak Control Tasks} \label{DetTCT}

\begin{table}[htbp]
\centering
\begin{tabular}{|c||c|}
\hline
\multicolumn{2}{|c|}{STATE SPACE} \\
\hline
{Scalar States} & \makecell{$\beta_n$, Internal Inductance, Line Averaged Density,\\ Loop Voltage, Stored Energy} \\
\hline
{Profile States} & \makecell{Electron Density, Electron Temperature, Pressure,\\ Safety Factor, Ion Temperature, Ion Rotation} \\
\hline
\multicolumn{2}{c}{}\\[-0.5em]
\hline
\multicolumn{2}{|c|}{ACTION SPACE} \\
\hline
{Targets} & {Current Target, Density Target} \\
\hline
{Shape Variables} & \makecell{Elongation, Top Triangularity, Bottom Triangularity, Minor Radius, \\Radius and Vertical Locations of the Plasma Center} \\
\hline
{Direct Actuators} & \makecell{Power Injected, Torque Injected, Total Deuterium Gas Injection,\\ Total ECH Power, Magnitude and Sign of the Toroidal Magnetic Field} \\
\hline
\end{tabular}
\caption{The state and action spaces of the tokamak control tasks.}
\label{table:7}
\end{table}

Nuclear fusion is a promising energy source to meet the world’s growing demand. It involves fusing the nuclei of two light atoms, such as hydrogen, to form a heavier nucleus, typically helium, releasing energy in the process. The primary challenge of fusion is confining a plasma, i.e., an ionized gas of hydrogen isotopes, while heating it and increasing its pressure to initiate and sustain fusion reactions. The tokamak is one of the most promising confinement devices. It uses magnetic fields acting on hydrogen atoms that have been ionized (given a charge) so that the magnetic fields can exert a force on the moving particles \citep{1512794}. 

\citet{DBLP:journals/corr/abs-2404-12416} trained a deep recurrent network as a dynamics model for the DIII-D tokamak, a device located in San Diego, California, and operated by General Atomics, using a large dataset of operational data from that device. A typical shot (i.e., episode) on DIII-D lasts around 6-8 seconds, consisting of a one-second ramp-up phase, a multi-second flat-top phase, and a one-second ramp-down phase. The DIII-D also features several real-time and post-shot diagnostics that measure the magnetic equilibrium and plasma parameters with high temporal resolution. \textbf{The authors demonstrate that the learned model predicts these measurements for entire shots with remarkable accuracy. Thus, we use this model as a ``ground truth" simulator for tokamak control tasks. Specifically, we generate a dataset of 725270 transitions for offline RL and evaluate the learned policy using this data-driven simulator.}

The state and action spaces for the tokamak control tasks are outlined in Table \ref{table:7}. For detailed physical explanations of their components, please refer to \citet{abbate2021data, DBLP:conf/l4dc/CharABBCCEMRKS23, ariola2008magnetic}. The state space consists of five scalar values and six profiles which are discretized measurements of physical quantities along the minor radius of the toroid. After applying principal component analysis \citep{mackiewicz1993principal}, the pressure profile is reduced to two dimensions, while the other profiles are reduced to four dimensions each. In total, the state space comprises 27 dimensions. The action space includes direct control actuators for neutral beam power, torque, gas, ECH power, current, and magnetic field, as well as target values for plasma density and plasma shape, which are managed through a lower-level control module. Altogether, the action space consists of 14 dimensions. While for certain tasks, it is possible to prune the state and action spaces to reduce the learning complexity, we have chosen not to apply any domain-specific knowledge in these evaluations for general RL algorithms. We reserve the domain-specific applications of our algorithms, which would require more domain knowledge and engineering efforts, as an important future work.

We select a reference shot from DIII-D, which spans 251 time steps, and use its trajectories of Ion Rotation, Electron Temperature, and $\beta_n$ as targets for three tracking tasks. Specifically, $\beta_n$ is the normalized ratio between plasma pressure and magnetic pressure, a key quantity serving as a rough economic indicator of efficiency. Since the tracking targets vary over time, we include the time step as part of the policy input. \textbf{The reward function for each task is defined as the negative squared tracking error of the corresponding component (i.e., temperature, rotation, or $\beta_n$) at each time step, and the reward is normalized by the episode horizon (i.e., 251 time steps).} \textbf{Notably, for policy learning, the reward function is provided rather than learned from the offline dataset as in D4RL tasks; and the dataset (for offline RL) does not include the reference shot or any nearby, similar shots, making pure imitation infeasible.}

\section{Additional Evaluation Results} \label{AddEval}

% This appendix provides detailed empirical results and analyses to supplement the findings presented in Section~\ref{sec:eval}. The content is organized to thoroughly validate our proposed framework and its components. We begin with a series of ablation studies: Appendix~\ref{BeliefAdap} provides a quantitative and qualitative analysis of the benefits of belief adaptation; Appendix~\ref{MCTSAdap} demonstrates the significant performance enhancement from our MCTS-based planner; and Appendix~\ref{sec:rq3_discussion} discusses the trade-offs between the two policy update mechanisms investigated. We then broaden our comparison: Appendix~\ref{ComMF} evaluates our algorithm against a suite of model-free offline RL methods, and Appendix~\ref{sec:sota_comp} presents our comprehensive state-of-the-art comparison against 19 baselines, including a statistical significance analysis. Finally, Appendix~\ref{ASRP} provides an ablation study on the necessity of the pessimistic reward penalty. Appendix~\ref{CompMuzero} provides a detailed comparison with MuZero and its variants in terms of algorithm design, empirical performance, and computational cost.

This appendix provides detailed empirical results and analyses to supplement the findings presented in Section~\ref{sec:eval}. \textbf{Appendix~\ref{BeliefAdap}} offers both quantitative and qualitative analyses of the benefits of belief adaptation, presenting additional results for \textbf{Section~\ref{sec:benefit_adaptation}}.

\noindent The following appendices report additional experimental results for benchmarking on MuJoCo D4RL tasks (i.e., \textbf{Section \ref{sec:d4rl_benchmark}}):
\begin{itemize}
    \item \textbf{Appendix~\ref{ComMF}:} Evaluates our algorithm against a suite of model-free offline RL methods.
    \item \textbf{Appendix~\ref{sec:sota_comp}:} Presents a comprehensive comparison against 19 offline RL baselines, including a statistical significance analysis.
    \item \textbf{Appendix~\ref{CompMuzero}:} Provides a detailed comparison with MuZero and its variants, focusing on their algorithm design, empirical performance, and computational cost.
\end{itemize}

\noindent Finally, we provide complete results for the ablation study (i.e., \textbf{Section~\ref{sec:ablation}}):
\begin{itemize}
    \item \textbf{Appendix~\ref{MCTSAdap}:} Demonstrates the significant performance enhancement achieved by the belief adaptation module and our MCTS-based planner.
    \item \textbf{Appendix~\ref{sec:rq3_discussion}:} Discusses the trade-offs between the two policy distillation mechanisms investigated (i.e., SAC and supervised learning).
    \item \textbf{Appendix~\ref{ASRP}:} Presents an ablation study on the necessity of using reward penalty in offline MBRL.
\end{itemize}

\subsection{Quantitative and Qualitative Analysis of the Effectiveness of Belief Adaptation}
\label{BeliefAdap}

To understand why belief adaptation over an ensemble of models using a Bayes-Adaptive MDP (BAMDP) is effective, we analyze its impact on the prediction accuracy of the learned world models. The Bayesian adaptation, as defined in Eq. (\ref{b'}), uses observed state transition sequences to adjust the belief over each ensemble member, thereby enhancing the quality of predicted rollout trajectories.

For each benchmarking task in D4RL MuJoCo, we compute the average transition likelihood (i.e., $\mathcal{P}(s'|s, a) \mathcal{R}(r|s, a)$) for the provided offline rollouts, comparing cases with and without belief adaptation. The ratios of these likelihoods are presented in Table~\ref{table:e1_likelihood}. Results show that the offline rollouts, collected from real environments, are more likely under the adapted ensemble. This demonstrates that Bayesian belief adaptation serves as an effective calibration method for the learned world models.

\begin{table}[h!]
\centering
\begin{tabular}{c|c|c|c}
\hline
hc-med-expert & hc-med-replay & hc-medium & hc-random \\
\hline
0.7515 & 2.6315 & 2.2674 & 2.4561 \\
\hline
\hline
hp-med-expert & hp-med-replay & hp-medium & hp-random \\
\hline
14.064 & 4.2214 & 3.5913 & 1.7400 \\
\hline
\hline
wk-med-expert & wk-med-replay & wk-medium & wk-random \\
\hline
2.0898 & 3.8169 & 34.264 & 1.0341 \\
\hline
\end{tabular}
\caption{Ratios of average transition likelihoods in offline data with and without belief adaptation across ensemble members. A ratio greater than 1 indicates that the adapted ensemble assigns higher likelihood to the true transitions.}
\label{table:e1_likelihood}
\end{table}

Furthermore, we evaluate the quality of imaginary rollouts generated by the learned models. Table~\ref{table:prediction_error_main} in the main content presents the prediction errors for next states and rewards in these rollouts. The results reveal that the adaptive ensemble achieves more accurate predictions in 10 out of 12 benchmarking tasks, indicating that belief adaptation leads to a more accurate model for planning.

\begin{figure*}[t]
\centering
\subfigure[Offline rollout]{
\label{fig:e1_belief_a}
\includegraphics[width=2.5in, height=1.4in]{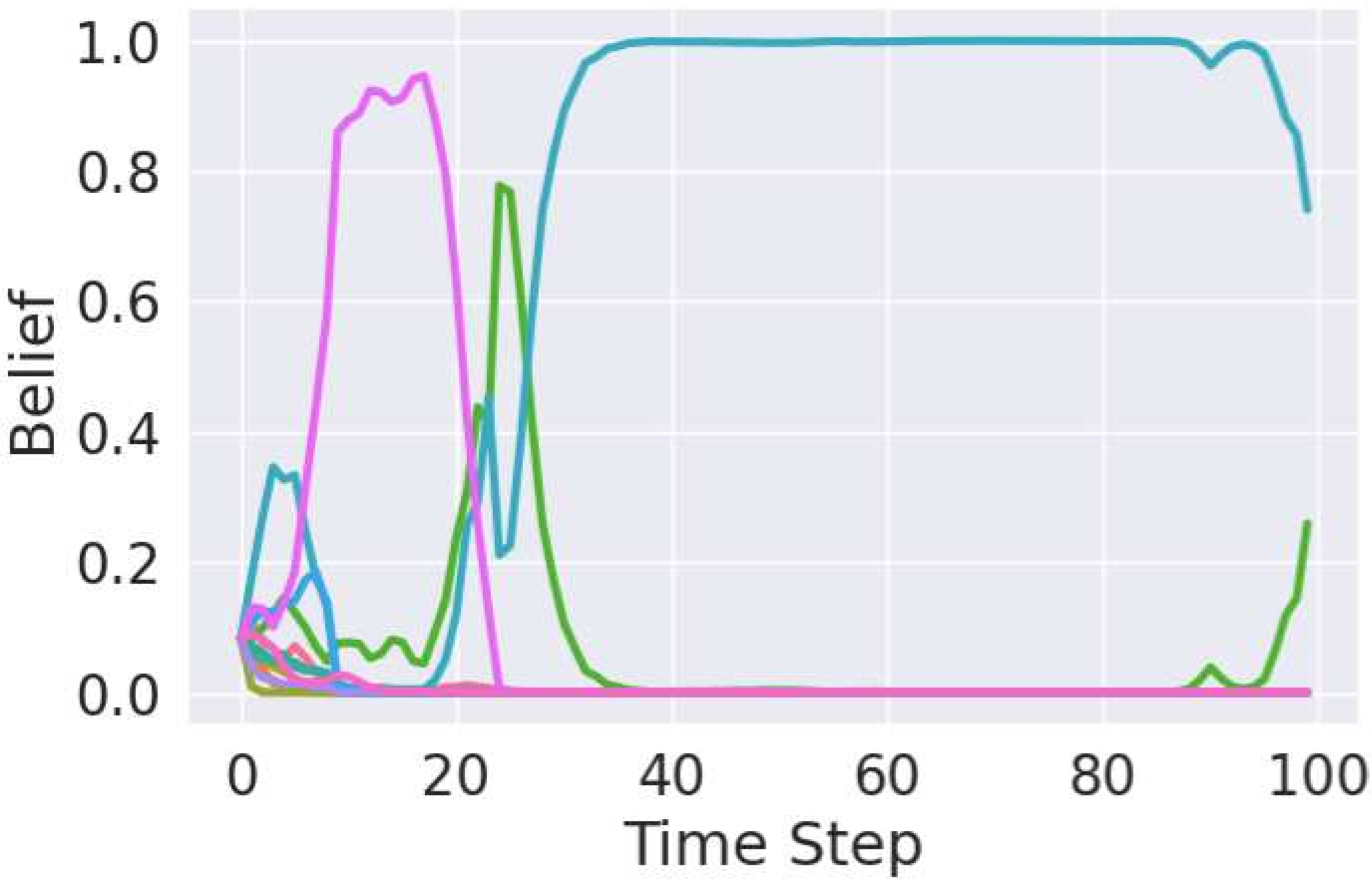}}
\subfigure[Imaginary rollout (at step 0)]{
\label{fig:e1_belief_b}
\includegraphics[width=2.5in, height=1.4in]{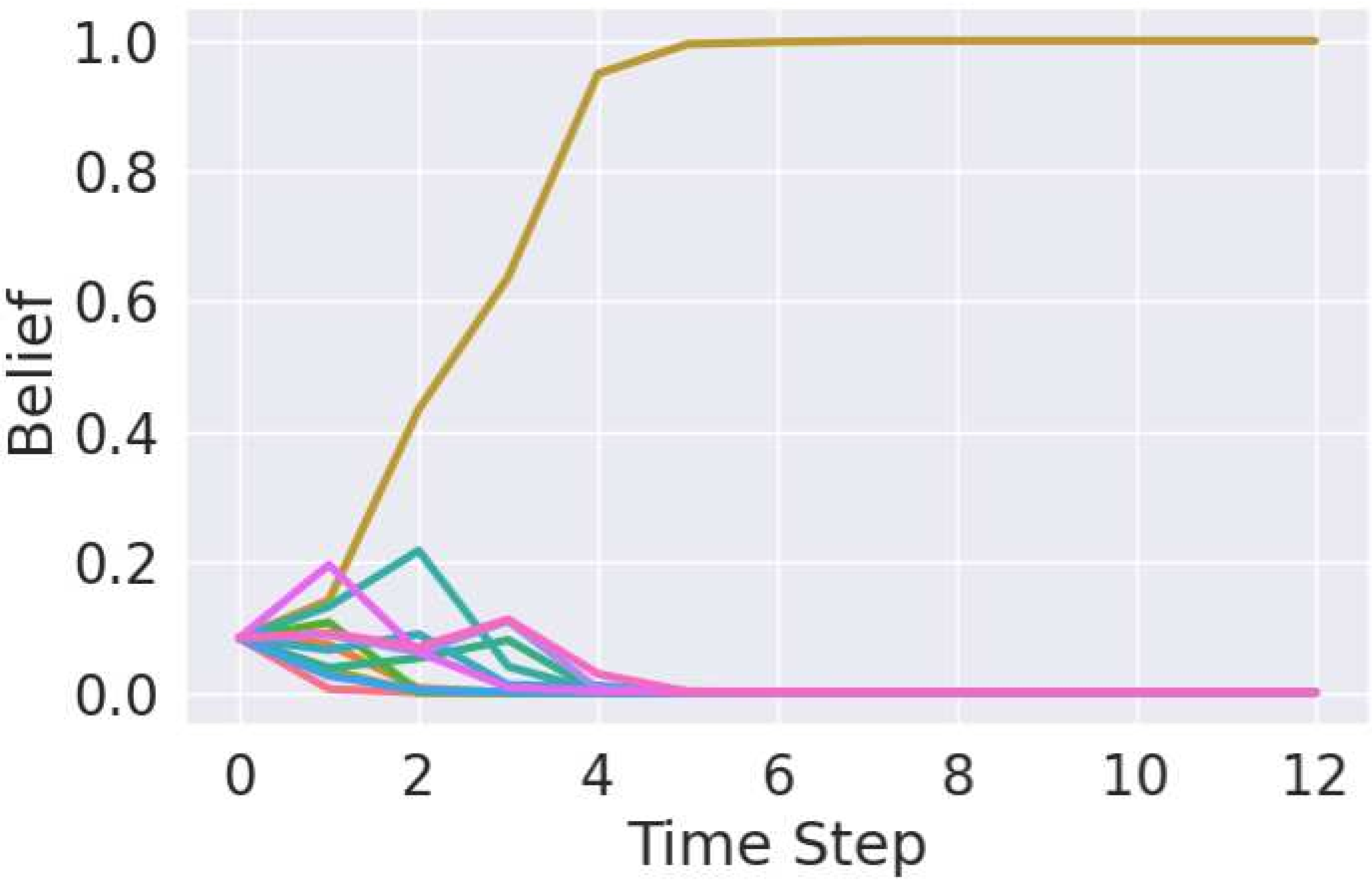}}
\subfigure[Imaginary rollout (at step 22)]{
\label{fig:e1_belief_c}
\includegraphics[width=2.5in, height=1.4in]{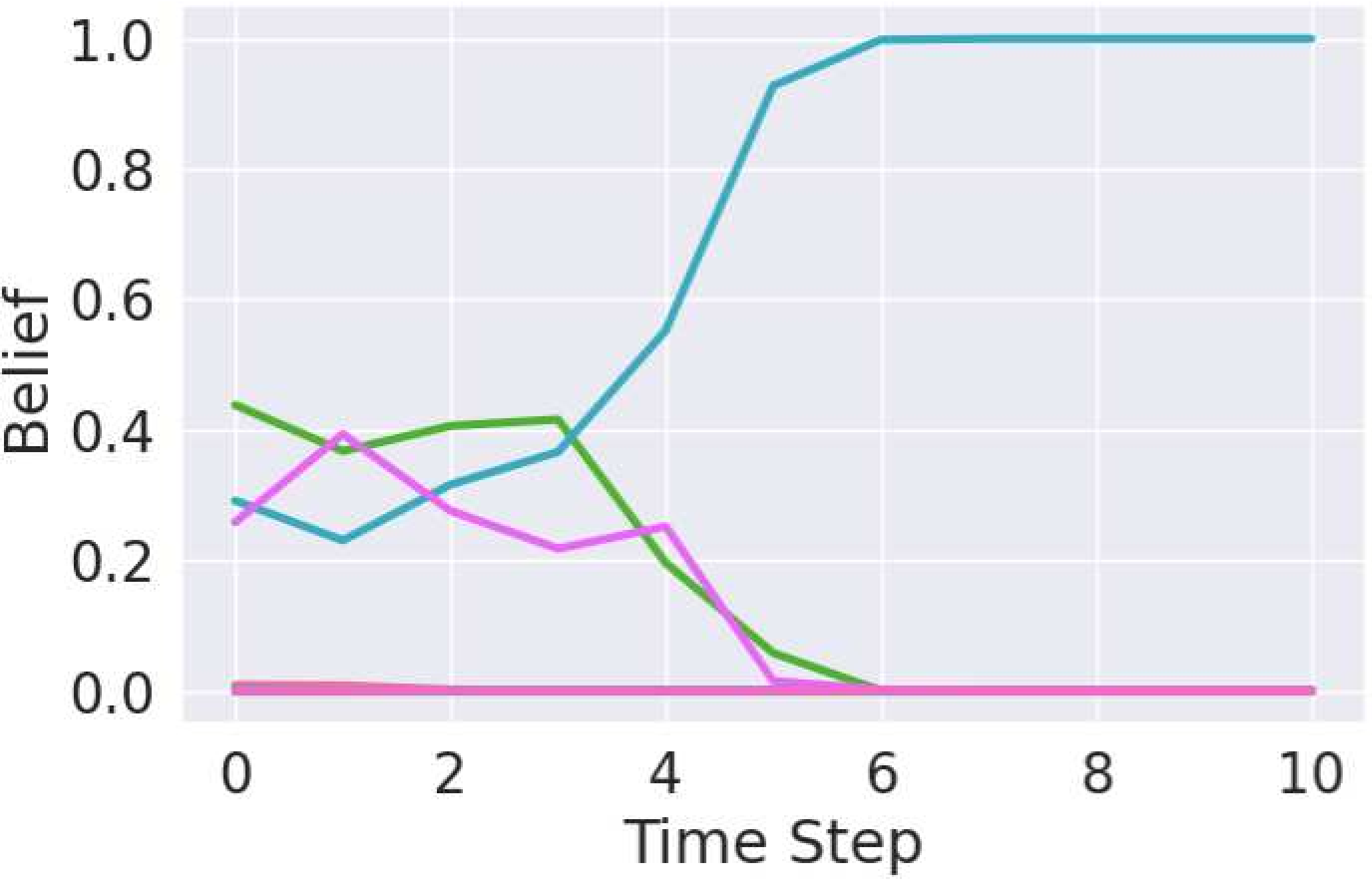}}
\subfigure[Imaginary rollout (at step 495)]{
\label{fig:e1_belief_d}
\includegraphics[width=2.5in, height=1.4in]{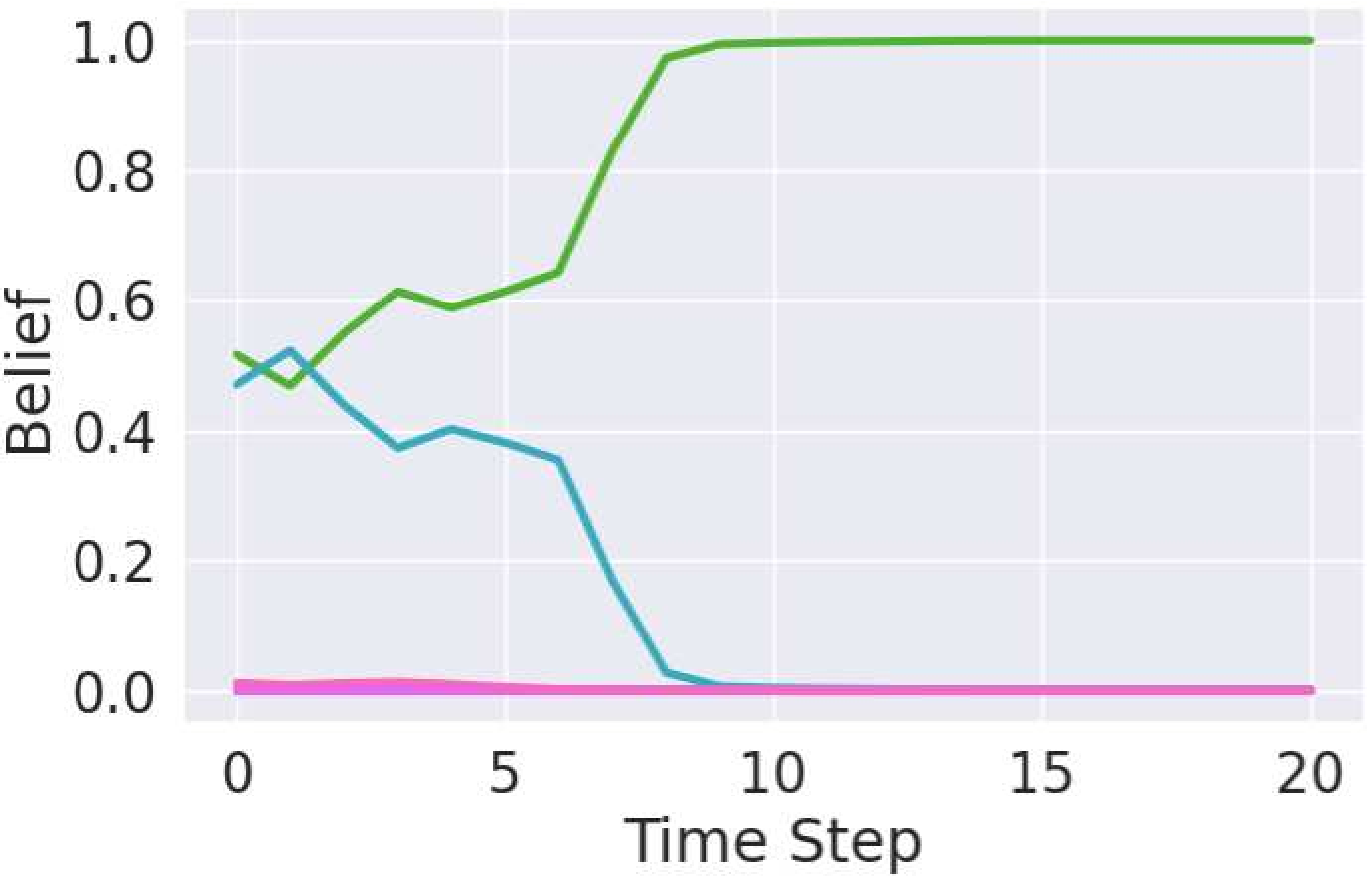}}
\caption{Belief adaptation during offline and imaginary rollouts. (a) shows the belief over twelve ensemble members, adapting to an offline trajectory of Hopper-med-expert. (b), (c), and (d) illustrate the belief changes during imaginary rollouts which start from different points of the offline trajectory shown in (a).}
\label{fig:e1_belief_viz}
\end{figure*}

As a qualitative analysis, we plot the belief adaptation over an offline trajectory (from $\mathcal{D}_\mu$) in Hopper-med-expert in Figure~\ref{fig:e1_belief_a}. Initially, all twelve ensemble members have the same belief. As the trajectory progresses, the beliefs of each member are updated based on the transition history, with the dominant model (the one with the highest belief) continuously changing. Figure~\ref{fig:e1_belief_b}-\ref{fig:e1_belief_d} further show the belief adaption during imaginary rollouts that start from different time steps of that offline trajectory.
% This dynamic belief adaptation is crucial for achieving lower prediction errors during imaginary rollouts as shown in Table \ref{table:prediction_error_main}.

\subsection{Comparison with Model-free Methods on D4RL MuJoCo} \label{ComMF}

\begin{table}[htbp]
\centering
\begin{tabular}{c|c|c|c|c|c|c|c}
\hline
{\makecell{Data Type}} & {Environment} & {\makecell{BA-MCTS (ours)}} & {CQL} & {BEAR} & {BRAC-v} & {SAC} & {BC}\\
\hline 
\hline
{random} & {HalfCheetah}  & {\textbf{41.45} (1.40)} & {35.4} & {25.1} & {31.2} & {30.5} & {2.1} \\
{random} & {Hopper} & {\textbf{31.42} (0.10)} & {10.8} & {11.4} & {12.2} & {11.3} & {1.6} \\
{random} & {Walker2d}  & {\textbf{21.54} (0.06)} & {7.0} & {7.3} & {1.9} & {4.1} & {9.8}\\
\hline
{medium} & {HalfCheetah}  & {\textbf{77.31} (1.40)} & {44.4} & {41.7} & {46.3} & {-4.3} & {36.1}\\
{medium} & {Hopper}  & {\textbf{103.9} (0.33)} & {86.6} & {52.1} & {31.1} & {0.8} & {29.0}\\
{medium} & {Walker2d}  & {\textbf{88.23} (0.78)} & {74.5} & {59.1} & {81.1} & {0.9} & {6.6}\\
\hline 
{med-replay} & {HalfCheetah}  & {\textbf{71.24} (2.65)} & {46.2} & {38.6} & {47.7} & {-2.4} & {38.4}\\
{med-replay} & {Hopper}  & {\textbf{106.4} (0.53)} & {48.6} & {33.7} & {0.6} & {3.5} & {11.8}\\
{med-replay} & {Walker2d}  & {\textbf{91.42} (1.54)} & {32.6} & {19.2} & {0.9} & {1.9} & {11.3}\\
\hline 
{med-expert} & {HalfCheetah}  & {\textbf{102.3} (2.27)} & {62.4} & {53.4} & {41.9} & {1.8} & {35.8}\\
{med-expert} & {Hopper}  & {111.8 (0.82)} & {111} & {96.3} & {0.8} & {1.6} & {\textbf{111.9}}\\
{med-expert} & {Walker2d} & {\textbf{116.0} (1.49)} & {98.7} & {40.1} & {81.6} & {-0.1} & {6.4}\\
\hline
\hline
\multicolumn{2}{c|}{Average Score} & {\textbf{80.25}} & {54.85} & {39.83} & {31.44} & {4.13} & {25.07}\\
\hline 
\end{tabular}
\caption{Comparisons between the proposed algorithms and model-free offline policy learning methods on the D4RL benchmark suite. Each value represents the normalized score, as proposed in \citet{DBLP:journals/corr/abs-2004-07219}, of the policy trained by the corresponding algorithm. These scores are undiscounted returns normalized to approximately range between 0 and 100, where a score of 0 corresponds to a random policy and a score of 100 corresponds to an expert-level policy. For our algorithms, we report the average score of the final ten policy learning epochs and its standard deviation across three different random seeds.}
\label{table:2}
\end{table}

\textbf{Here, we compare our algorithms with a series of model-free offline policy learning \citep{DBLP:journals/corr/abs-2402-13777} methods in Table~\ref{table:2}.} We include SOTA model-free offline RL methods: {CQL} \citep{DBLP:conf/nips/KumarZTL20}, {BEAR} \cite{DBLP:conf/nips/KumarFSTL19}, and {BRAC-v} \citep{DBLP:journals/corr/abs-1911-11361}. Additionally, we show the performance of directly applying {SAC} or Behavioral Cloning {(BC)} \citep{DBLP:journals/corr/abs-2402-13777} to the provided offline dataset in the last two columns. The mean performance of the baselines are taken from related works \citep{DBLP:conf/nips/YuTYEZLFM20, DBLP:conf/nips/KidambiRNJ20, DBLP:journals/corr/abs-2004-07219}. Our algorithm shows significantly better performance, demonstrating the necessity of model-based learning in these environments. 
% The training plots of our proposed algorithms in each environment is further detailed in Figure \ref{fig:3}.

\begin{figure*}[ht]
\centering
\subfigure[hc-med-expert]{
\includegraphics[width=1.65in, height=1.1in]{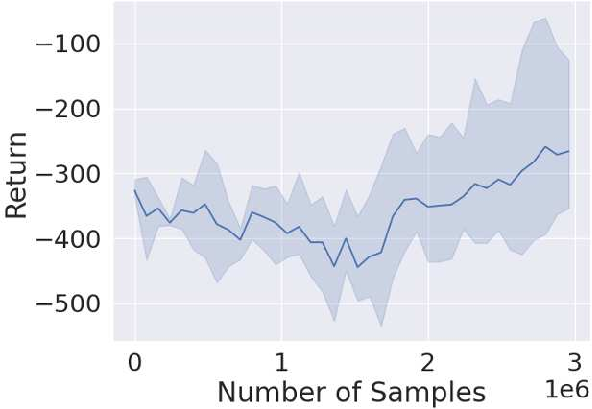}}
\subfigure[hc-med-replay]{
\includegraphics[width=1.65in, height=1.1in]{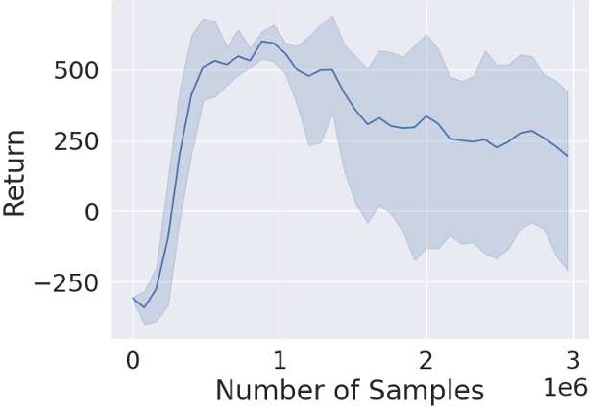}}
\subfigure[hc-medium]{
\includegraphics[width=1.65in, height=1.1in]{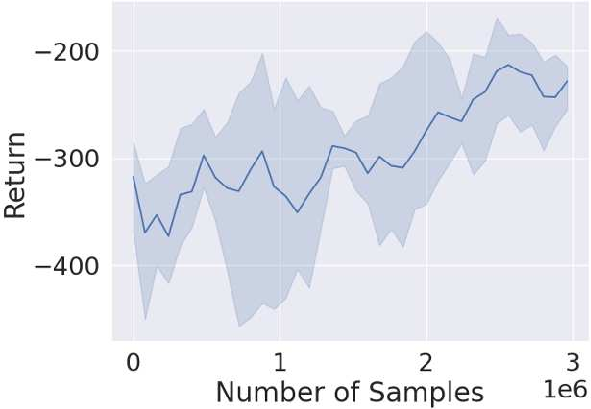}}
\subfigure[hc-random]{
\includegraphics[width=1.65in, height=1.1in]{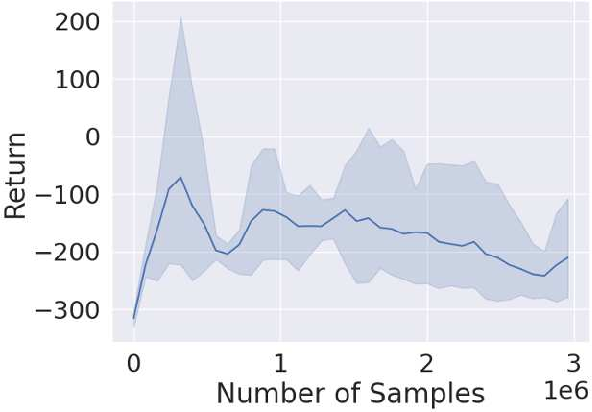}}
\subfigure[hp-med-expert]{
\includegraphics[width=1.65in, height=1.1in]{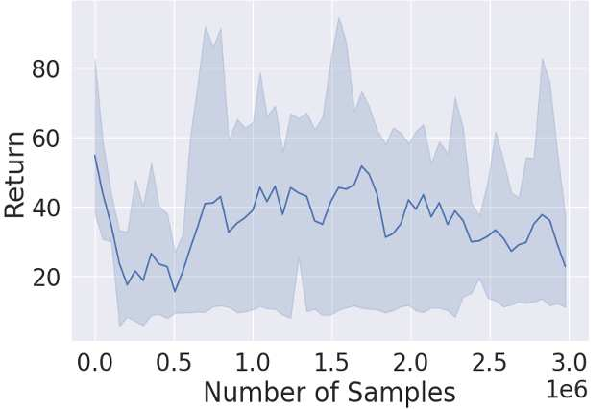}}
\subfigure[hp-med-replay]{
\includegraphics[width=1.65in, height=1.1in]{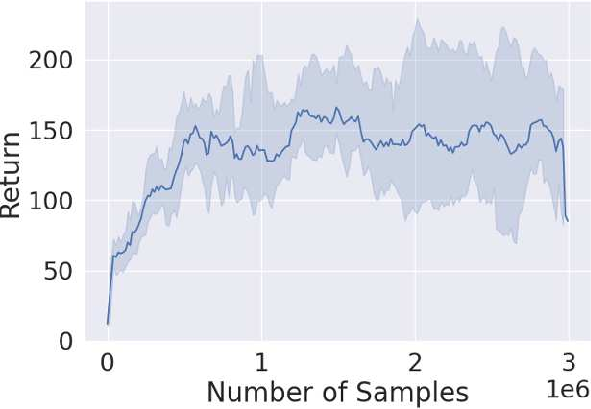}}
\subfigure[hp-medium]{
\includegraphics[width=1.65in, height=1.1in]{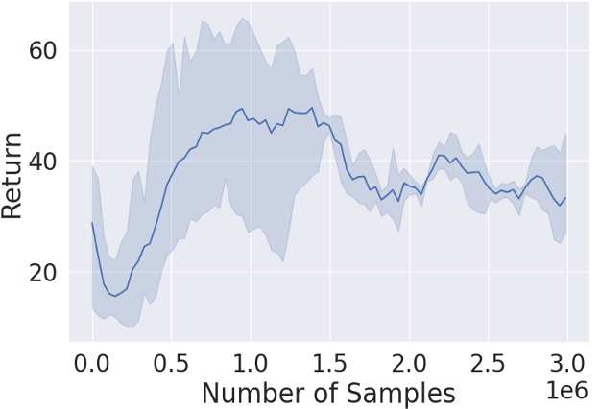}}
\subfigure[hp-random]{
\includegraphics[width=1.65in, height=1.1in]{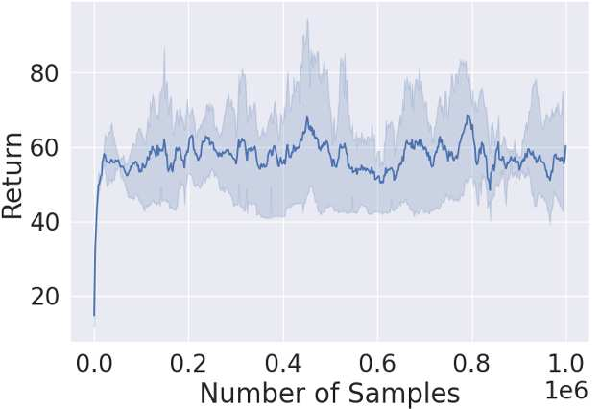}}
\subfigure[wk-med-expert]{
\includegraphics[width=1.65in, height=1.1in]{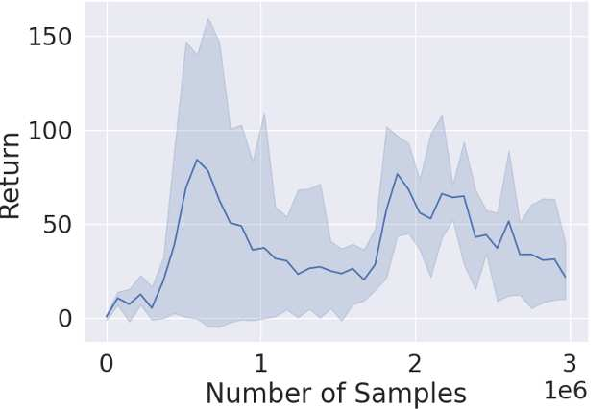}}
\subfigure[wk-med-replay]{
\includegraphics[width=1.65in, height=1.1in]{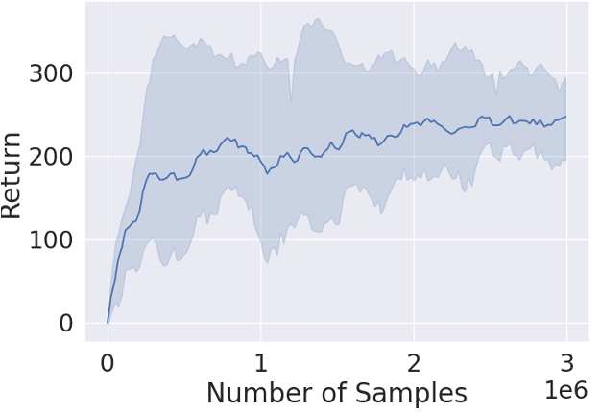}}
\subfigure[wk-medium]{
\includegraphics[width=1.65in, height=1.1in]{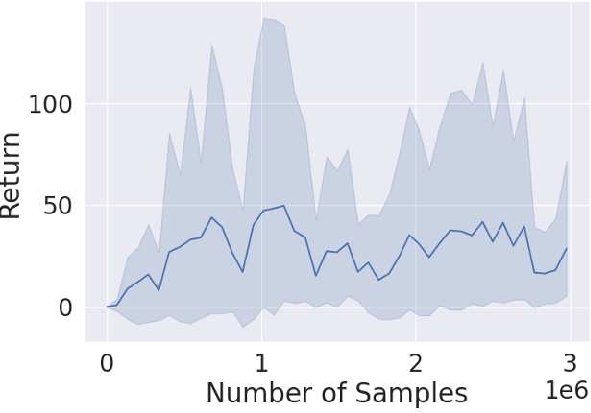}}
\subfigure[wk-random]{
\includegraphics[width=1.65in, height=1.1in]{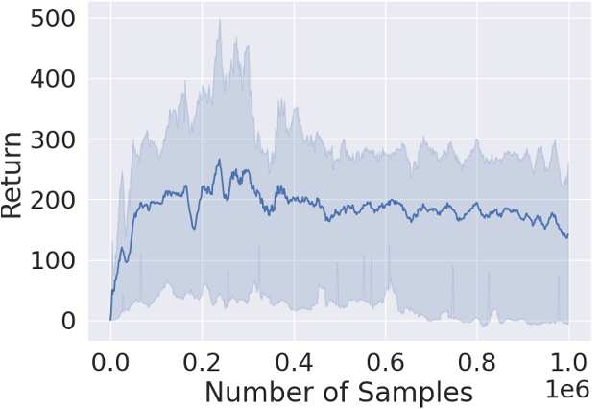}}
\vspace{-.1in}
\caption{Performance of Sampled EfficientZero on D4RL MuJoCo tasks. The results for HalfCheetah, Hopper, and Walker2d are presented in the three rows, respectively. Each subfigure depicts the change in undiscounted episodic return as a function of the number of training samples. Experiments are repeated three times with different random seeds, with the solid line representing the mean and the shaded area indicating the 95\% confidence interval. For reference, the expert-level episodic returns (corresponding to scores of 100) for HalfCheetah, Hopper, and Walker2d are 12135, 3234.3, and 4592.3, respectively.}
\label{fig:muzero_perf}
\vspace{-.1in}
\end{figure*}

\subsection{Comprehensive Benchmarking on D4RL MuJoCo}
\label{sec:sota_comp}
To firmly establish the state-of-the-art (SOTA) performance of our algorithm, this section provides a comprehensive comparison of {BA-MCTS} against 19 offline RL methods on the 12 D4RL MuJoCo tasks. To provide context, we briefly categorize the baselines:
\begin{itemize}
    \item \textbf{The model-based (MBRL) methods} include classic approaches like {MOPO} \citep{DBLP:conf/nips/YuTYEZLFM20}, {MOReL} \citep{DBLP:conf/nips/KidambiRNJ20}, and {COMBO} \citep{DBLP:conf/nips/YuKRRLF21}, as well as more recent/SOTA methods such as {MOBILE} \citep{DBLP:conf/icml/SunZJLYY23}, {CBOP} \citep{DBLP:conf/iclr/JeongWGKAS23}, {RAMBO} \citep{DBLP:conf/nips/RigterLH22}, and {MAPLE} \citep{DBLP:conf/nips/ChenYLLQSY21}.
    \item \textbf{The model-free (MFRL) methods} cover a wide range of techniques. We include prominent SOTA methods such as the value-based algorithm {CQL} \citep{DBLP:conf/nips/KumarZTL20}, the implicit Q-learning method {IQL} \citep{kostrikov2021offlinereinforcementlearningimplicit}, ensemble-based methods like {SAC-N} and {EDAC} \citep{an2021uncertaintybasedofflinereinforcementlearning}, the policy constraint method {TD3+BC} \citep{fujimoto2021minimalistapproachofflinereinforcement}, and the sequence modeling approach Decision Transformer {(DT)} \citep{chen2021decisiontransformerreinforcementlearning}. We also compare against classic MFRL baselines, including {BEAR} \citep{DBLP:conf/nips/KumarFSTL19}, {BRAC-v} \citep{DBLP:journals/corr/abs-1911-11361}, and the direct application of {SAC} and {Behavioral Cloning (BC)} \citep{DBLP:journals/corr/abs-2402-13777}. Finally, {APE-V} \citep{DBLP:conf/icml/GhoshAAL22} is included as a conceptually related model-free method that also leverages the BAMDP framework.
\end{itemize}

The results, including a statistical significance analysis using Cohen's d \citep{cohen1988statistical}, are summarized in Table~\ref{tab:sota_comprehensive_main} in the main content. Note that {MOReL}, {IQL}, and the model-free offline RL baselines in Table~\ref{table:2} did not report standard deviations. Additionally, {IQL} and Decision Transformer {(DT)} were not evaluated on D4RL tasks with random data.

As shown in Table~\ref{tab:sota_comprehensive_main}, {BA-MCTS} achieves the highest average score among all 20 algorithms, both including and excluding the challenging `random` datasets. To quantify the significance of this improvement, we compute Cohen's d against all baselines for which standard deviations are available. The results show that the Cohen's d value is greater than 1.8 in all cases, and for most baselines, it is significantly larger (e.g., $>$ 3.5). This indicates that the performance improvement of our algorithm is not only substantial in magnitude but also statistically significant. This comprehensive comparison provides strong evidence that our proposed method establishes a new state-of-the-art in offline RL on this widely-used benchmark.

\subsection{Comparison with MuZero and its Variants}
\label{CompMuzero}

Monte-Carlo Tree Search (MCTS) \citep{browne2012survey} has been successfully integrated with RL, as exemplified by AlphaZero \citep{DBLP:journals/corr/abs-1712-01815} and MuZero \citep{DBLP:journals/nature/SchrittwieserAH20}. These methods have achieved superhuman performance in domains requiring highly sophisticated decision-making processes. AlphaZero relies on a given world model, whereas MuZero learns the world model and policy simultaneously by interacting with the environment. Although there have been various extensions of MuZero \citep{DBLP:conf/icml/HubertSABSS21, DBLP:conf/nips/SchrittwieserHM21, DBLP:conf/nips/YeLKAG21, DBLP:conf/iclr/AntonoglouSOHS22, oren2022mcts, zhao2024a}, most are designed for online MBRL. \textbf{According to \citet{DBLP:conf/nips/NiuPYLZRHLL23}, applying MuZero to offline learning, especially for continuous control in highly stochastic environments, remains a significant open challenge.}

Our algorithm design differs from MuZero (and its variants) in several key aspects: (1) MuZero integrates model learning and policy training into a single stage, using a world model defined in a latent state space. Our algorithm separately learns a world model and then trains the policy on top of it, aligning with the widely-adopted offline MBRL framework. (2) MuZero employs a single latent world model (rather than an ensemble) and does not explicitly account for uncertainty in dynamics or reward predictions. This makes it particularly vulnerable to modeling errors in offline MBRL. (3) We introduce double progressive widening \citep{DBLP:conf/pkdd/AugerCT13} and Bayes-adaptive planning into MCTS, resulting in a fundamentally different planning algorithm compared to the one used in MuZero.

To evaluate the performance of MuZero-type methods on D4RL MuJoCo tasks, we use the open-source implementation and hyperparameter configurations of Sampled EfficientZero \citep{DBLP:conf/nips/YeLKAG21} provided by LightZero \citep{DBLP:conf/nips/NiuPYLZRHLL23}. Benchmarking results from LightZero indicate that Sampled EfficientZero achieves the best performance on (online) MuJoCo locomotion tasks compared to other MuZero variants. To adapt it for offline learning, we employ the reanalyse technique proposed by \citet{DBLP:conf/nips/SchrittwieserHM21}.

The evaluation results are presented in Figure~\ref{fig:muzero_perf}. As shown, the performance is significantly worse compared to our methods listed in Figure~\ref{fig:3}, despite Sampled EfficientZero's higher computational cost. For reference, the expert-level episodic returns (corresponding to scores of 100) for HalfCheetah, Hopper, and Walker2d are 12135, 3234.3, and 4592.3, respectively.

In Table~\ref{tab:muzero_time}, we report the training time of our proposed algorithms and Sampled EfficientZero on the D4RL MuJoCo tasks. The experiments were conducted on a server with 40 Intel(R) Xeon(R) Gold 5215 CPUs and 4 Tesla V100-SXM2-32GB GPUs. Our search-based algorithm, {BA-MCTS}, requires considerably less training time than Sampled EfficientZero to achieve its superior performance. While {BA-MCTS} requires more computation than the non-search variant {BA-MBRL}, this extra cost is limited to the offline training phase; no MCTS is performed during deployment, ensuring real-time execution.

\begin{table}[t]
\centering
\begin{tabular}{c|c|c|c|c|c}
\hline
{\makecell{Data Type}} & {Environment} & {\makecell{BA-MCTS \\ -SL (ours)}} & {\makecell{BA-MCTS \\ (ours)}} & {\makecell{BA-MBRL \\ (ours)}} & {\makecell{Sampled \\ EfficientZero}} \\
\hline 
\hline
{random} & {HalfCheetah} & {11.2 (2.6)} & {14.6 (2.2)} & {1.2 (0.4)} & {54.8 (2.6)}  \\
{random} & {Hopper} & {48.7 (2.4)} & {65.2 (1.9)} & {5.6 (0.8)} & {153.7 (19.4)} \\
{random} & {Walker2d} & {25.7 (2.0)} & {38.1 (1.1)} & {5.2 (1.1)} & {175.7 (28.2)} \\
\hline
{medium} & {HalfCheetah} & {12.6 (3.0)} & {15.3 (1.3)} & {1.9 (0.2)} & {54.7 (1.9)} \\
{medium} & {Hopper} & {18.9 (4.8)} & {67.5 (0.2)} & {1.8 (0.2)} & {70.5 (1.0)}\\
{medium} & {Walker2d} & {24.0 (1.8)} & {33.3 (4.0)} & {4.8 (1.0)} & {63.7 (1.5)}\\
\hline 
{med-replay} & {HalfCheetah} & {24.7 (0.9)} & {22.4 (1.2)} & {2.1 (0.4)} & {54.8 (1.8)} \\
{med-replay} & {Hopper} & {17.9 (0.0)} & {12.7 (1.2)} & {5.4 (0.2)} & {126.6 (12.0)}\\
{med-replay} & {Walker2d} & {40.3 (7.2)} & {77.0 (2.4)} & {8.0 (0.3)} & {134.9 (4.2)} \\
\hline 
{med-expert} & {HalfCheetah} & {32.6 (0.1)} & {12.6 (0.8)} & {4.7 (1.1)} & {55.4 (1.3)}\\
{med-expert} & {Hopper} & {61.5 (4.6)} & {76.9 (11.6)} & {5.3 (0.5)} & {66.1 (0.9)} \\
{med-expert} & {Walker2d} & {35.0 (6.5)} & {55.9 (1.8)} & {2.9 (0.9)} & {60.8 (1.8)}\\
\hline 
\end{tabular}
\caption{Training time (in hours) of our algorithms and Sampled EfficientZero for each evaluation task. Results are presented as the mean and standard deviation from three repeated experiments.}
\label{tab:muzero_time}
\end{table}

The performance gap may be attributed to the challenges of applying MuZero's design to the offline setting. World model learning is particularly difficult when the state-action space is vast but training data is limited. Sampled EfficientZero integrates model learning and policy training into a single stage, which can significantly increase the learning difficulty compared to our decoupled approach. 
% Furthermore, for continuous control, the search result is a distribution over a finite set of sampled actions, which can be a poor approximation of the optimal continuous action distribution. Thus, purely mimicking this discrete search result, as done in MuZero's supervised learning update, may be less effective than the policy gradient methods used in our framework, which better account for the continuous nature of the action space.

\subsection{Ablation Study on the Enhancement from Belief Adaptation and MCTS-based Planning}
\label{MCTSAdap}

\begin{table}[h!]
\centering
\begin{tabular}{c|c|c|c|c|c}
\hline
{\makecell{Data Type}} & {Environment} & {Optimized} & {\makecell{BA-MBRL}} & {\makecell{BA-MCTS}} & {\makecell{BA-MCTS-SL}}\\
\hline
\hline
{random} & {HalfCheetah} & {31.7} & {32.76 (1.16)} & {\textbf{36.23} (1.04)}  & {29.20 (2.00)} \\
{random} & {Hopper} & {12.1} & {31.47 (0.03)} & {31.56 (0.12)} & {\textbf{33.83} (0.10)} \\
{random} & {Walker2d} & {21.7} & {21.45 (0.53)} & {21.59 (0.32)} & {\textbf{21.89} (0.07)} \\
\hline
{medium} & {HalfCheetah} & {45.7} & {56.54 (5.20)} & {\textbf{75.84} (3.81)} & {70.47 (3.52)} \\
{medium} & {Hopper} & {69.3} & {\textbf{98.25} (3.42)} & {96.70 (14.0)} & {97.75 (7.09)} \\
{medium} & {Walker2d} & {79.7} & {75.41 (4.17)} & {74.73 (3.25)} & {\textbf{82.84} (1.85)}\\
\hline
{med-replay} & {HalfCheetah} & {58.0} & {62.50 (0.18)} & {\textbf{65.45} (0.81)} & {61.16 (1.60)}\\
{med-replay} & {Hopper} & {90.8} & {93.91 (4.25)} & {101.8 (3.46)} & {\textbf{106.3} (0.13)} \\
{med-replay} & {Walker2d} & {65.8} & {\textbf{97.54} (1.93)} & {95.06 (2.11)} & {92.13 (5.13)}\\
\hline
{med-expert} & {HalfCheetah} & \textbf{104.2} & {90.52 (4.13)} & {76.16 (10.3)} & {80.53 (6.63)} \\
{med-expert} & {Hopper} & {105.8} & {107.8 (0.37)} & {108.3 (0.22)}  & {\textbf{112.2} (0.29)}\\
{med-expert} & {Walker2d} & {97.1} & {84.71 (0.87)} & {\textbf{110.0} (1.74)} & {107.7 (0.82)} \\
\hline
\hline
\multicolumn{2}{c|}{Average Score} & {65.2} & {71.06} & {74.45} & {\textbf{74.62}}\\
\hline
\end{tabular}
\caption{Performance comparison among our algorithm: BA-MCTS, its ablated versions: BA-MBRL and BA-MCTS-SL, as well as the baseline: Optimized~\citep{DBLP:conf/iclr/LuBPOR22}. To ensure a principled ablation study, we reimplement BA-MCTS and its ablated variants using the codebase of Optimized, which systematically investigates design choices in offline MBRL. We make only minimal modifications to the code and hyperparameter settings. This approach ensures that any observed performance improvements are attributable to algorithmic design, rather than implementation differences or tuning tricks.}
\label{tab:ablation_belief}
\end{table}

For a principled ablation study, we reimplement BA-MCTS and its ablated variants based on the codebase of Optimized~\citep{DBLP:conf/iclr/LuBPOR22}, which thoroughly explores design choices in offline MBRL, making minimal changes to the code and hyperparameter settings (see Appendix \ref{KeyPara}). The comparison results are presented in Table \ref{tab:ablation_belief}\footnote{The performance of BA-MCTS decreases, relative to the results presented in Table \ref{table:1}, for two main reasons: (1) The original implementation in Table \ref{table:1} is based on a different codebase than the one used in Optimized; and (2) for the ablation results shown in Table \ref{tab:ablation_belief}, we adopt the reward penalty design from Optimized to enable exact ablation comparisons, rather than using the one described in Eq. \eqref{p-rwd}.}, and the training curves of BA-MCTS and its ablated variants are shown in Figure \ref{fig:3}.

\textbf{Firstly}, by comparing the following two methods, any observed performance difference can be directly attributed to the benefits of belief adaptation in a BAMDP formulation.
\begin{itemize} 
\item Optimized: A strong, non-adaptive offline model-based RL baseline. 
\item BA-MBRL: Our ablation variant, which is a minimal modification of Optimized, differing only by the integration of our BAMDP framework that dynamically adapts the belief over the model ensemble via Bayesian updates (Eq. \eqref{b'}). 
\end{itemize}

As demonstrated, BA-MBRL outperforms the Optimized baseline on 8 out of 12 tasks and achieves a higher average score, providing direct evidence that Bayesian belief adaptation leads to improved final policies.

\textbf{Secondly}, by comparing the following two methods, any observed performance difference can be directly attributed to the benefits of incorporating MCTS-based planning:
\begin{itemize}
    \item {BA-MBRL}: Our ablation variant that incorporates the BAMDP belief adaptation framework but does \textit{not} use MCTS. Its policy improvement relies solely on Soft Actor Critic.
    \item {BA-MCTS}: Our final algorithm, which builds upon {BA-MBRL} by integrating the {Continuous BAMCP} deep search planner.
\end{itemize}

This substantial gain highlights the effectiveness of using {Continuous BAMCP} as a powerful policy improvement operator. While {BA-MBRL} relies on the implicit policy improvement provided by the temporal difference learning process, {BA-MCTS} leverages deep lookahead search to explicitly find a stronger, non-parametric policy ($\pi_{\text{ret}}$ returned by Algorithm \ref{alg:2}) at each decision point. The actor-critic learner is then trained on the high-quality trajectories generated from this improved policy. This ``RL+Search" paradigm, where the learner distills knowledge from a stronger planner, is the primary reason for the observed performance leap. This result strongly supports our claim that the deep search component is crucial for achieving state-of-the-art performance in the offline model-based setting.

\begin{figure*}[t]
\centering
\subfigure[hc-med-expert]{
\label{fig:3(a)} 
\includegraphics[width=1.65in, height=1.1in]{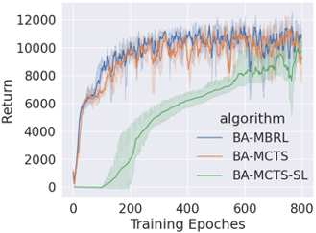}}
\subfigure[hc-med-replay]{
\label{fig:3(b)} 
\includegraphics[width=1.65in, height=1.1in]{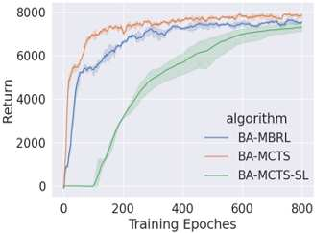}}
\subfigure[hc-medium]{
\label{fig:3(c)} 
\includegraphics[width=1.65in, height=1.1in]{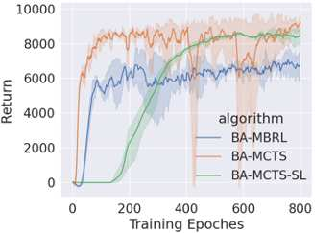}}
\subfigure[hc-random]{
\label{fig:3(d)} 
\includegraphics[width=1.65in, height=1.1in]{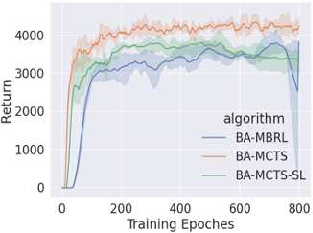}}
\subfigure[hp-med-expert]{
\label{fig:3(e)} 
\includegraphics[width=1.65in, height=1.1in]{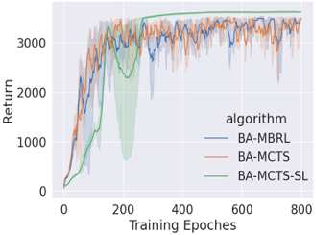}}
\subfigure[hp-med-replay]{
\label{fig:3(f)} 
\includegraphics[width=1.65in, height=1.1in]{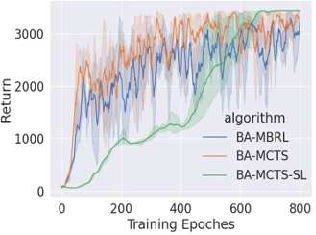}}
\subfigure[hp-medium]{
\label{fig:3(g)} 
\includegraphics[width=1.65in, height=1.1in]{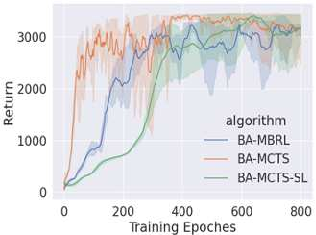}}
\subfigure[hp-random]{
\label{fig:3(h)} 
\includegraphics[width=1.65in, height=1.1in]{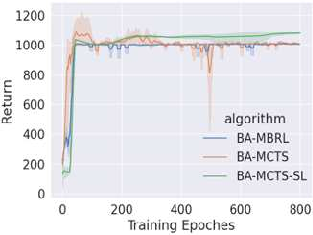}}
\subfigure[wk-med-expert]{
\label{fig:3(i)} 
\includegraphics[width=1.65in, height=1.1in]{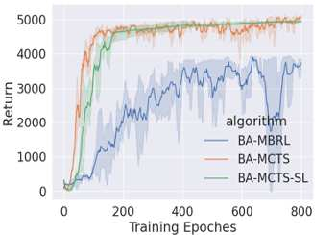}}
\subfigure[wk-med-replay]{
\label{fig:3(j)} 
\includegraphics[width=1.65in, height=1.1in]{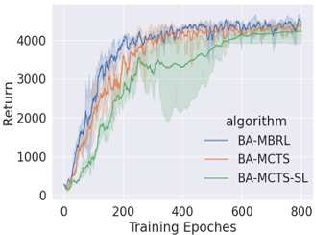}}
\subfigure[wk-medium]{
\label{fig:3(k)} 
\includegraphics[width=1.65in, height=1.1in]{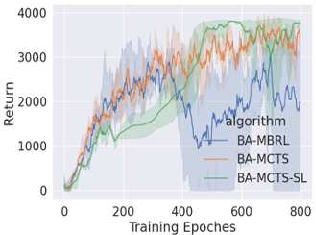}}
\subfigure[wk-random]{
\label{fig:3(l)} 
\includegraphics[width=1.65in, height=1.1in]{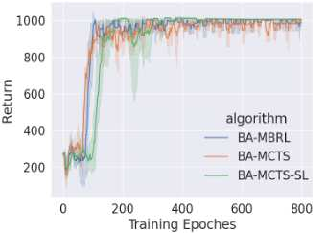}}
\caption{Performance of our proposed algorithms on D4RL MuJoCo tasks in the ablation study. The results for HalfCheetah, Hopper, and Walker2d are presented in the three rows, respectively. Each subfigure depicts the change in the undiscounted episodic return as a function of training epochs. Experiments are repeated three times with different random seeds, with the solid line representing the mean and the shaded area indicating the 95\% confidence interval. For reference, the expert-level episodic returns for HalfCheetah, Hopper, and Walker2d are 12135, 3234.3, and 4592.3, respectively. Note that the training epochs for each algorithm, as listed in Table \ref{table:3}, have been linearly scaled to 800 for better visualization.}
\label{fig:3} 
\end{figure*}

\subsection{Comparison of Policy Distillation Mechanisms}
\label{sec:rq3_discussion}

This section addresses the research question: How can the search outcomes (i.e., $\pi_{\text{ret}}$) be effectively used for policy improvement? We investigate this by comparing two different mechanisms for distilling the knowledge from the MCTS planner back into the parameterized policy network $\pi$.
\begin{itemize}
    \item {BA-MCTS}: Our primary algorithm, which uses a policy gradient method (i.e., SAC) to update the policy network using trajectories generated by the planner.
    \item {BA-MCTS-SL}: An alternative variant where the policy network is updated via supervised learning (SL) to directly mimic the planner's output distribution $\pi_{\text{ret}}$ by minimizing a cross-entropy loss.
\end{itemize}

% The performance of both variants is presented in Table \ref{tab:ablation_belief}. By comparing the columns for {BA-MCTS} and {BA-MCTS-SL}, we can analyze the trade-offs between these two approaches.

\textbf{Performance Comparison:}
As shown in Table~\ref{tab:ablation_belief}, {BA-MCTS-SL} performs similarly to {BA-MCTS} in terms of the final average score, validating the effectiveness of both policy update mechanisms. This indicates that both policy gradient and supervised learning are viable methods for distilling the results of the deep search. However, we note that {BA-MCTS-SL} can be less stable on certain complex tasks; for example, it struggles on the Walker2d environments, where a warm-up training phase (using BA-MBRL) is required to establish a good initial policy for the supervised learning to be effective.

\textbf{Training Stability and Robustness:}
On the other hand, a key advantage of the SL-based policy update is its training stability. As evident in the training plots (Figure~\ref{fig:3}), {BA-MCTS-SL} often exhibits much smoother learning curves compared to the other two algorithms, indicating greater robustness in model selection for deployment. Furthermore, as shown in our ablation study on the reward penalty (Appendix~\ref{ASRP}), we found that the SL-based policy update is less sensitive to the degree of pessimism.

\subsection{Ablation Study on the Reward Penalty} \label{ASRP}

\begin{table}[htbp]
\small
\centering
\resizebox{\textwidth}{!}{%
\begin{tabular}{c|c|c|c|c|c|c|c}
\hline
{\makecell{Data Type}} & {Environment} & {\makecell{BA-MCTS \\ -SL}} & {\makecell{BA-MCTS}} & {\makecell{BA-MBRL}} & {\makecell{BA-MCTS \\ -SL ($\lambda=0$)}} & {\makecell{BA-MCTS \\ ($\lambda=0$)}} & {\makecell{BA-MBRL \\ ($\lambda=0$)}} \\
\hline 
\hline
{random} & {HalfCheetah} & {29.20 (2.00)} & {36.23 (1.04)} & {32.76 (1.16)} & {34.80 (1.39)} & {38.78 (1.65)} & {\textbf{39.64} (2.86)} \\
{random} & {Hopper} & {\textbf{33.83} (0.10)} & {31.56 (0.12)} & {31.47 (0.03)} & {9.16 (0.16)} & {7.44 (0.14)} & {6.97 (0.07)}\\
{random} & {Walker2d} & {\textbf{21.89} (0.07)} & {21.59 (0.32)} & {21.45 (0.53)} & {17.53 (6.16)} & {21.53 (0.42)} & {21.41 (0.64)}\\
\hline
{medium} & {HalfCheetah} & {70.47 (3.52)} & {\textbf{75.84} (3.81)} & {56.54 (5.20)} & {61.64 (4.58)} & {60.84 (2.00)} & {41.49 (2.29)}\\
{medium} & {Hopper} & {97.75 (7.09)} & {96.70 (14.0)} & {98.25 (3.42)} & {102.8 (2.29)} & {\textbf{104.4} (1.88)} & {93.68 (11.4)}\\
{medium} & {Walker2d} & {82.24 (1.85)} & {74.73 (3.25)} & {75.41 (4.17)} & {\textbf{82.61} (0.86)} & {57.01 (7.24)} & {57.97 (15.4)}\\
\hline 
{med-replay} & {HalfCheetah} & {61.16 (1.60)} & {\textbf{65.45} (0.81)} & {62.50 (0.18)} & {42.10 (2.85)} & {36.65 (2.39)} & {44.03 (7.35)}\\
{med-replay} & {Hopper} & {106.3 (0.13)} & {101.8 (3.46)} & {93.91 (4.25)} & {\textbf{107.9} (0.07)} & {84.11 (2.97)} & {91.81 (11.5)} \\
{med-replay} & {Walker2d} & {92.13 (5.13)} & {95.06 (2.11)} & {97.54 (1.93)} & {88.61 (5.21)} & {97.33 (3.51)} & {\textbf{98.19} (1.23)}\\
\hline 
{med-expert} & {HalfCheetah} & {80.53 (6.63)} & {76.16 (10.3)} & {\textbf{90.52} (4.13)} & {51.76 (5.31)} & {26.60 (1.46)} & {29.88 (2.28)}\\
{med-expert} & {Hopper} & {\textbf{112.2} (0.29)} & {108.3 (0.22)} & {107.8 (0.37)} & {106.8 (6.34)} & {81.76 (6.45)} & {86.79 (18.7)}\\
{med-expert} & {Walker2d} & {107.7 (0.82)} & {110.0 (1.74)} & {84.71 (0.87)} & {\textbf{110.8} (1.72)} & {110.2 (0.91)} & {53.35 (38.0)}\\
\hline
\hline
\multicolumn{2}{c|}{Average Score} & {\textbf{74.62}} & {74.45} & {71.06} & {68.04} & {60.55} & {55.43} \\
\hline 
\end{tabular}%
}
\caption{Comparison of the proposed algorithms with their corresponding versions without the reward penalty (i.e., $\lambda=0$). The definitions of the scores in this table are consistent with those in Table \ref{table:2}.}
\label{table:8}
\end{table}

To demonstrate the necessity of incorporating the reward penalty in offline MBRL, we conduct an ablation study by setting $\lambda$ in Eq. (\ref{p-rwd}) to 0, resulting in ablated versions of our proposed three algorithms. The results are presented in Table \ref{table:8}. \textbf{First}, the average performance of the algorithms with the reward penalty is consistently better, demonstrating the importance of using reward penalties in offline MBRL to prevent the overexploitation of the learned world models (which can be inaccurate). \textbf{Second}, the supervised-learning-based algorithm (i.e., {BA-MCTS-SL} ($\lambda=0$)) is less affected by the absence of the reward penalty, compared to the policy-gradient-based methods. Notably, {BA-MCTS-SL} ($\lambda=0$) and {BA-MCTS-SL} achieve comparable performance in the Hopper and Walker2d tasks. This shows an additional advantage of {BA-MCTS-SL} -- its reduced sensitivity to model inaccuracies. \textbf{Lastly}, there are instances where superior performance is achieved with $\lambda$ set to 0. For example, {BA-MCTS-SL} ($\lambda=0$) performs better than {BA-MCTS-SL} in 5 out of 12 tasks. This suggests that the performance of our algorithms in Tables \ref{tab:ablation_belief} could be further improved by adjusting hyperparameters such as $\lambda$.
% \footnote{As mentioned in Section \ref{sec:eval}, we retain most of the hyperparameter settings from Optimized \cite{DBLP:conf/iclr/LuBPOR22} to ensure that the performance improvements are attributed to our algorithm design.}.

\section{Statement on the Use of Large Language Models}

In the spirit of transparency and in accordance with the ICLR disclosure policy, we detail the use of Large Language Models (LLMs) in the preparation of this manuscript.

LLMs were utilized as an assistive tool in two main capacities: language refinement and technical support during the formalization of proofs. The specific roles were:

\begin{itemize}
    \item \textbf{Language and Readability Enhancement}: We employed an LLM to polish the manuscript's language. This included improving grammar, rephrasing sentences for better clarity, and ensuring a consistent and professional tone throughout the paper.
    
    \item \textbf{Assistance with Theoretical Proofs}: An LLM provided support in the technical writing and verification of our theoretical proofs. This assistance was twofold:
    \begin{enumerate}
        \item Generating \LaTeX{} source code for complex mathematical equations.
        \item Acting as a verification aid by reviewing the logical steps in our derivations to help identify potential inconsistencies or errors.
    \end{enumerate}
\end{itemize}

It is crucial to state that the LLM did not contribute to the original ideation, conceptual development, or the core derivations of the proofs themselves. The intellectual ownership and the scientific reasoning behind all theoretical results remain entirely with the authors. The authors have meticulously reviewed, validated, and take full and final responsibility for all content, including the correctness and rigor of the proofs that were checked with LLM assistance.

\end{document}